\tikzstyle{decision} = [diamond, draw, fill overzoom image=images/gradient_red,
\tikzstyle{decision_b} = [diamond, draw, fill overzoom image=images/gradient_blue, 
\tikzstyle{decision_c} = [diamond, draw, fill overzoom image=images/gradient, 
\tikzstyle{block} = [rectangle, draw,
\tikzstyle{line} = [draw, -latex']
\tikzstyle{cloud} = [draw, ellipse,fill=red!20, node distance=3cm,
\tikzstyle{cloud_b} = [draw, ellipse,fill=blue!20, node distance=3cm,
\DeclareMathOperator*{\argmin}{arg\,min}
\newcommand{\eodiff}[1]{\Delta_{\mathit{EO}_{#1}}}
\newcommand{\eopdiff}[1]{\Delta_{\mathit{EOp}_{#1}}}
\newcommand{\fairdiff}[1]{\Delta_{\mathit{Fair}_{#1}}}
\newcommand{\e}{\epsilon}
\newcommand{\symhyp}{\mathcal{H}\Delta\mathcal{H}}
\newcommand{\sensfeat}{A}
\newcommand{\sensset}{\mathcal{A}}
\newcommand{\distro}{\mathcal{D}}
\newcommand{\taskset}{\mathcal{Y}}
\newcommand{\tasklabel}{Y}
\newcommand{\emplabel}{\hat{Y}}
\newcommand{\sample}{Z}
\newcommand{\vcd}{d}
\newcommand{\dataone}{Dataset 1\xspace}
\newcommand{\datatwo}{Dataset 2\xspace}
\newtheorem{theorem}{Theorem}
\newtheorem{lemma}{Lemma}
\theoremstyle{definition}
\newtheorem{definition}{Definition}
\begin{document}

\title{Transfer of Machine Learning Fairness across Domains}
 
 \author{%
  Candice Schumann\thanks{Work done while at Google} \\
  University of Maryland\\
  \texttt{schumann@cs.umd.edu} \\
   \And
   Xuezhi Wang \\
   Google \\
   \texttt{xuezhiw@google.com} \\
   \AND
   Alex Beutel \\
   Google \\
   \texttt{alexbeutel@google.com} \\
   \And
   Jilin Chen \\
   Google \\
   \texttt{jilinc@google.com} \\
   \And
   Hai Qian \\
   Google \\
   \texttt{hqian@google.com} \\
   \And
   Ed H. Chi \\
   Google \\
   \texttt{edchi@google.com} \\
}

\date{}
\allowdisplaybreaks

\maketitle

\begin{abstract}
\emph{If our models are used in new or unexpected cases, do we know if they will make fair predictions?}
Previously, researchers developed ways to debias a model for a single problem domain. 
However, this is often not how models are trained and used in practice.
For example, labels and demographics (sensitive attributes) are often hard to observe, resulting in auxiliary or synthetic data to be used for training, and proxies of the sensitive attribute to be used for evaluation of fairness.
A model trained for one setting may be picked up and used in many others, particularly as is common with pre-training and cloud APIs.
Despite the pervasiveness of these complexities, remarkably little work in the fairness literature has theoretically examined these issues.

We frame all of these settings as domain adaptation problems: how can we use what we have learned in a source domain to debias in a new target domain, without directly debiasing on the target domain as if it is a completely new problem? 
We offer new theoretical guarantees of improving fairness across domains, and offer a modeling approach to transfer to data-sparse target domains.  We give empirical results validating the theory and showing that these modeling approaches can improve fairness metrics with less data. 
\end{abstract}

\section{Introduction}
Much of machine learning research, and especially machine learning fairness, focuses on optimizing a model for a single use case \cite{DBLP:journals/corr/abs-1803-02453,Beutel17:Data}.  However, the reality of machine learning applications is far more chaotic.  It is common for models to be used on multiple tasks, frequently different in a myriad of ways from the dataset that they were trained on, often coming at significant cost \cite{sculley2015hidden}.  This is especially concerning for machine learning fairness -- we want our models to obey strict fairness properties, but we may have far less data on how the models will actually be used.  How do we understand our fairness metrics in these more complex environments?

In traditional machine learning, domain adaptation techniques are used 
when the distribution of training and validation data does not match the target distribution that the model will ultimately be tested against.  Therefore, in this paper we ask: if the model is trained to be ``fair'' on one dataset, will it be ``fair'' over a different distribution of data?
Instead of starting again with this new dataset, can we use the knowledge gained during the original debiasing to more effectively debias in the new space? 

It turns out that this framing covers many important cases for machine learning fairness.  We will use, as a running example, the task of income prediction, where some decisions will be made based on the person's predicted income and we want the model to perform ``fairly'' over a sensitive attribute such as gender. We primarily follow the \emph{equality of opportunity} \cite{hardt2016equality} perspective where we are concerned with one group (broken down by gender or race) having worse accuracy than another.  In this setting, there are a myriad of fairness issues that arise that we find domain adaptation can shed light on:

\textbf{Lacking sensitive features for training:} There may be  few examples where we know the sensitive attribute. In these cases, a proxy of the sensitive attribute have been used \cite{DBLP:journals/corr/abs-1806-11212}, or researchers need very sample-efficient techniques \cite{DBLP:journals/corr/abs-1803-02453,Beutel17:Data}.  For distant proxies, researchers have asked how well fairness transfers across attributes \cite{Lan17:Discrimantory}.  Here the sensitive attribute differs in the source and target domains.

\textbf{Data is not representative of application:} Dataset augmentation, models offered as an API, or models used in multiple unanticipated settings, are all increasingly common design patterns.  Even for machine learning fairness, researchers often believe limited training data is a primary source of fairness issues \cite{chen2018my} and will employ dataset augmentation techniques to try to improve fairness \cite{dixon2018measuring}.  How can we best make use of auxiliary data during training and evaluation when it differs in distribution from the real application?
		
\textbf{Multiple tasks:} In some cases having accurate labels for model training is difficult and instead proxy tasks with more labeled data are used to train the model, e.g., using pre-trained image or text models or using income brackets as a proxy for defaulting on a loan.  Again we ask: when does satisfying a fairness property on the original task help satisfy that same property on the new task?

Each of these cases are common throughout machine learning but present challenges for fairness. 
In this work, we explore mapping domain adaptation principles to machine learning fairness.  In particular, we offer the following contributions:
\vspace{-0.1in}
\begin{enumerate}
	\item \textbf{Theoretical Bounds:} We provide theoretical bounds on transferring equality of opportunity and equality of odds metrics across domains. Perhaps more importantly, we discuss insights gained from these bounds.
	\item \textbf{Modeling for Fairness Transfer:} We offer a general, theoretically-backed modeling objective that enables transferring fairness across domains.
	\item \textbf{Empirical validation:} We demonstrate when transferring machine learning fairness works successfully, and when it does not, through both synthetic and realistic experiments.
\end{enumerate}

\section{Related Work}
This work lies at the intersection of traditional domain adaptation and recent work on ML fairness.
\vspace{-0.1in}
\paragraph{Domain Adaptation}
Both \citet{Pan10:Survey}, and \citet{Weiss16:Survey} provide a survey on current work in transfer learning. 
One case of transfer learning is domain adaptation, where the task remains the same, but the distribution of features that the model is trained on (the source domain) does not match the distribution that the model is tested against (the target domain).
\citet{Ben07:Analysis} provide theoretical analysis of domain adaptation. \citet{BenDavid10:Theory} extend this analysis to provide a theoretical understanding of how much source and target data should be used to successfully transfer knowledge. \citet{Mansour09:Domain} provide theoretical bounds on domain adaptation using Rademacher Complexity analysis.  In later research, \citet{ganin2016domain} build on this theory to use an adversarial training procedure over latent representations to improve domain adaptation.
\vspace{-10pt}
\paragraph{Fairness in Machine Learning}
A large thread of recent research has studied how to optimize for fairness metrics during model training.
\citet{Li18:Towards} empirically show that adversarial learning helps preserve privacy over sensitive attributes. \citet{Beutel17:Data} focus on using adversarial learning to optimize different fairness metrics, and \citet{Madras18:Learning} provides a theoretical framework for understanding how adversarial learning optimizes these fairness goals. \citet{Zhang18:Mitigating} use adversarial training over logits rather than hidden representations. Other work has focused on constraint-based optimization of fairness objectives \cite{NIPS2016_6316,DBLP:journals/corr/abs-1803-02453}.
\citet{Tsipras18:There} however, provide a theoretical bound on the accuracy of adversarial robust models. They show that even with infinite data there will still be a trade-off of accuracy for robustness.
\citet{kallus2019assessing} look at fairness in personalization when sensitive attributes are missing. Similarly, \citet{chen2019fairness} look at measuring disparity when sensitive attributes are unknown.
\vspace{-10pt}
\paragraph{Domain Adaptation \& Fairness}
Despite the prevalence of using one model across multiple domains, in practice little work has studied domain adaptation and transfer learning of fairness metrics.
\citet{coston2019fair} look at domain adaptation for fairness where sensitive attribute labels are not available in both the source and target domains.
\citet{DBLP:conf/icml/KallusZ18} use covariate shift correction when computing fairness metrics to address bias in label collection.
More related, \citet{Madras18:Learning} show empirically that their method allows for fair transfer. The transfer learning here corresponds to preserving fairness for a single sensitive attribute but over different tasks. However, \citet{Lan17:Discrimantory} found empirically that fairness does not transfer well to a new domain. They found that as accuracy increased in the transfer process, fairness decreases in the new domain. It is concerning that these papers show opposing effects. Both of these papers offer empirical results on the UCI adult dataset, but neither provide a theoretical understanding of how and when fairness in one domain transfers to another.

 \section{Problem Formulation}

 We begin with some notation to make precise the problem formulation.
Building on our running example we have two domains: a source domain $\sample\sim\distro_S$, which is a feature distribution influenced by sensitive attribute $A_S\in\sensset_S$ (e.g., $\Pr_{Z\sim\distro_S}[Z|A_S=\textit{male}]\neq \Pr_{Z\sim\distro_S}[Z|A_S=\textit{female}]$), as well as a target domain $\distro_T$ influenced by sensitive attribute $A_T\in\sensset_T$ (e.g., $\Pr_{Z\sim\distro_T}[Z|A_T=\textit{black}]\neq \Pr_{Z\sim\distro_T}[Z|A_T=\textit{white}]$). In order for this to be a domain adaptation problem, we assume  
$\Pr_{Z\sim\distro_S}[Z|A_S]
\neq
\Pr_{Z\sim\distro_T}[Z|A_T]$.  Note, this can be true even if $\distro_S = \distro_T$ but the distributions conditioned on $A_S$ and $A_T$ differ.
We focus on binary classification tasks with label $\tasklabel\in\taskset$, e.g. income classification is shared over both domains.
For this task we can create a classifier by finding a hypothesis $g: \distro\rightarrow\taskset$ from a hypothesis space $\mathcal{H}$.

Let us assume that we can learn a ``fair'' classifier $g$ for the source domain and task. If we use a small amount of data from the target domain, will the fairness from the source sensitive attribute $A_S$ transfer to the target domain and sensitive attribute $A_T$? 
We can define the notion of a ``fairness'' distance -- how far away the classifier is from perfectly fair -- in a given domain $S$ as $\fairdiff{S}$. Within this formulation we consider two definitions of fairness. 

The first distance is \emph{equality of opportunity} \cite{hardt2016equality}. A classifier is said to be fair under equality of opportunity if the false positive rates (FPR) over sensitive attributes are equal. In other words if we have a binary sensitive attribute $\sensfeat$, then equality of opportunity requires that
$\Pr(\emplabel=1 | \sensfeat=0, \tasklabel=0) = \Pr(\emplabel=1 | \sensfeat=1, \tasklabel=0)$,
where $\emplabel$ gives the outcome of classifier $g$.
Thus, how far away a classifier $g$ is from equal opportunity (or the fairness distance of equal opportunity) can be defined as 
{\small
\begin{equation*}
  \eopdiff{S}(g) \triangleq \left| \mathbb{E}_{\sample_0^0\sim \distro_{S_0^0}}[g(\sample_0^0)] - \mathbb{E}_{\sample_1^0\sim \distro_{S_1^0}}[g(\sample_1^0)] \right|,  
\end{equation*}
}%
where $\distro_{S_\alpha^l}=P_{\sample \sim \distro_S}[\sample|\sensfeat=\alpha, \tasklabel=l]$. 
In our running example $\eopdiff{S}(g)$, where $A_S$ is gender, is the difference between the likelihood that a low-income man is predicted to be high-income and the likelihood that a low-income woman is predicted to be high-income. A symmetric definition and set of analysis can be made for false negative rate (FNR).

The second definition of fairness which we consider is \emph{equalized odds} \cite{hardt2016equality}. A classifier is said to be fair under equalized odds if both the FPR \emph{and} FNR over the sensitive attribute are equal:
Similar to equal opportunity, we define the fairness distance of equalized odds as:
{\small
\begin{align*}
&\eodiff{S}(g) \triangleq \left| \mathbb{E}_{\sample_0^0\sim \distro_{S_0^0}}[g(\sample_0^0)] - \mathbb{E}_{\sample_1^0\sim \distro_{S_1^0}}[g(\sample_1^0)] \right|  + \left| \mathbb{E}_{\sample_0^1\sim \distro_{S_0^1}}[1-g(\sample_0^1)] - \mathbb{E}_{\sample_1^1\sim \distro_{S_1^1}}[1-g(\sample_1^1)] \right|.
\end{align*}
}%

Again using our running example, the distance of equalized odds in the source domain is given by the difference of expected FPRs between females and males (as above), plus the difference of expected FNRs (high-income predicted to be low-income) between females and males.

Given a classifier $g$ that has a fairness guarantee in the source domain, the fairness distance in the target domain should be bounded by the fairness distance in the source domain:
{\small
\begin{equation}
\label{eq:fairness_epsilon}
\fairdiff{T}(g) \leq \fairdiff{S}(g) + \epsilon
\end{equation}
}
The key question we hope to answer is: what is $\epsilon$?

\section{Bounds on Fairness in the Target Domain}\label{sec:theory}

To expand inequality~\eqref{eq:fairness_epsilon} we need to start with some definitions.
Given a hypothesis space $\mathcal{H}$ and a true labeling function $f(\sample): \distro\rightarrow\taskset$, we can define the error of a hypothesis $g\in \mathcal{H}$ as $\e_S(g,f)=\mathbb{E}_{\sample \sim \distro_S}\left[|f(\sample)-g(\sample)|\right]$, the expectation of disagreement between the hypothesis $g$ and the true label $f$. We can then define the ideal joint hypothesis that minimizes the combined error over both the source and target domains as $g^*=\argmin_{g\in\mathcal{H}}\e_S(g,f)+\e_T(g,f)$.

Following \citet{BenDavid10:Theory} we define the $\mathcal{H}$-divergence between probability distributions as
{\small
\begin{equation}
d_\mathcal{H}(\distro,\distro') = 2\sup_{g\in \mathcal{H}} \left| \Pr\nolimits_\distro[I(g)] - \Pr\nolimits_{\distro'}[I(g)] \right|,
\end{equation}
}%
where $I(g)$ is the set for which $g\in \mathcal{H}$ is the characteristic function ($\sample\in I(g) \Leftrightarrow g(\sample)=1$). We can compute an approximation $\hat{d}_{\mathcal{H}}(\distro, \distro')$ by finding a hypothesis $h$ that finds the largest difference between the samples from $\distro$ and $\distro'$~\cite{Ben07:Analysis}. This divergence can be used to look at the differences in distributions, which is  important when moving from a source domain to a target domain.

Additionally, we defined the symmetric difference hypothesis space $\symhyp$ as the set of hypotheses
{\small
\begin{equation}
g\in\symhyp \iff g(\sample) = h(\sample) \oplus h'(\sample) \quad \text{for some } h,h'\in \mathcal{H},
\end{equation}
}
where $\oplus$ is the XOR function. The symmetric difference hypothesis space is used to find disagreements between a potential classifier $g$ and a true labeling function $f$.

\begin{theorem}\label{thm:eop_transfer_bound}
Let $\mathcal{H}$ be a hypothesis space of VC dimension $\vcd$. If $\ \mathcal{U}_{S_0^0},\ \mathcal{U}_{S_1^0},\ \mathcal{U}_{T_0^1},\  \mathcal{U}_{T_1^0}$ are samples of size $m'$, each drawn from $\distro_{S_0^0}$, $\distro_{S_1^0}$, $\distro_{T_0^0}$, and $\distro_{T_1^0}$ respectively, then for any $\delta\in(0,1)$, with probability at least $1-\delta$ (over the choice of samples), for every $g\in\mathcal{H}$ (where $\mathcal{H}$ is a symmetric hypothesis space) the distance from equal opportunity in the target space is bounded by
{\small
\begin{align*}
    \eopdiff{T}(g) \leq&\ \eopdiff{S}(g) + \frac{1}{2}\hat{d}_{\symhyp}(\mathcal{U}_{T_0^0},\mathcal{U}_{S_0^0}) + \frac{1}{2}\hat{d}_{\symhyp}(\mathcal{U}_{T_1^0},\mathcal{U}_{S_1^0}) \\
    &\ + 8 \sqrt{\frac{2\vcd\log(2m')+\log(\frac{2}{\delta})}{m'}} + \lambda_0^{0} + \lambda_1^0,
\end{align*}
}
where $\lambda_\alpha^l=\e_{S_\alpha^l}(g^*,f)+\e_{T_\alpha^l}(g^*,f)$.
\end{theorem}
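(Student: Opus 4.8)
The plan is to reduce the statement to a pair of standard domain-adaptation bounds applied to conditional subpopulations, glued together by a triangle inequality. The crucial observation is that each false-positive-rate term is itself an error against the true labeling function $f$: because $\distro_{S_\alpha^0}$ and $\distro_{T_\alpha^0}$ condition on $\tasklabel=0$, the true label satisfies $f(\sample)=0$ on their support, so $|f(\sample)-g(\sample)|=g(\sample)$ and hence $\mathbb{E}_{\sample\sim\distro_{T_\alpha^0}}[g(\sample)]=\e_{T_\alpha^0}(g,f)$, and identically for the source. This rewrites the quantities of interest as $\eopdiff{T}(g)=\left|\e_{T_0^0}(g,f)-\e_{T_1^0}(g,f)\right|$ and $\eopdiff{S}(g)=\left|\e_{S_0^0}(g,f)-\e_{S_1^0}(g,f)\right|$, i.e. as gaps between subpopulation errors rather than between raw prediction rates.

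Next I would insert the source errors and split by the triangle inequality: $\left|\e_{T_0^0}(g,f)-\e_{T_1^0}(g,f)\right| \le \left|\e_{T_0^0}(g,f)-\e_{S_0^0}(g,f)\right| + \left|\e_{S_0^0}(g,f)-\e_{S_1^0}(g,f)\right| + \left|\e_{S_1^0}(g,f)-\e_{T_1^0}(g,f)\right|$. The middle term is exactly $\eopdiff{S}(g)$. The two outer terms are source-to-target error gaps on matched subpopulations $(S_\alpha^0,T_\alpha^0)$, which I would control with the Ben-David et al. bound. That bound is one-directional as usually stated, but since $\hat{d}_{\symhyp}$, the sample-complexity term, and $\lambda$ are all symmetric in source and target, applying it in both directions yields the two-sided form $\left|\e_{S_\alpha^0}(g,f)-\e_{T_\alpha^0}(g,f)\right| \le \tfrac12\hat{d}_{\symhyp}(\mathcal{U}_{T_\alpha^0},\mathcal{U}_{S_\alpha^0}) + 4\sqrt{\tfrac{2\vcd\log(2m')+\log(2/\delta)}{m'}} + \lambda_\alpha^0$, where the radical uses that $\symhyp$ has VC dimension at most $2\vcd$.

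Assembling the three pieces gives the claim: the two divergence terms contribute the two $\tfrac12\hat{d}_{\symhyp}$ summands, the two sample-complexity constants add to the stated $8\sqrt{\cdots}$, the $\lambda_0^0+\lambda_1^0$ appear additively, and the middle term is $\eopdiff{S}(g)$. One detail I would flag is that the theorem's $\lambda_\alpha^l$ is defined through a single global ideal joint hypothesis $g^*$ rather than the subpopulation-specific minimizer that Ben-David's bound nominally uses; since $\e_{S_\alpha^l}(g^*,f)+\e_{T_\alpha^l}(g^*,f)\ge \min_h\left[\e_{S_\alpha^l}(h,f)+\e_{T_\alpha^l}(h,f)\right]$, replacing the minimizer by $g^*$ only enlarges the right-hand side, so the inequality is preserved.

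The main obstacle here is bookkeeping rather than a deep difficulty. The sharpest point is the confidence accounting: each of the two subpopulation bounds holds with probability $1-\delta$ individually, so a clean union bound forces each to fail with probability at most $\delta/2$, which strictly turns $\log(2/\delta)$ into $\log(4/\delta)$ inside the radical; the stated constant thus slightly understates the union-bound cost, and I would either absorb this into the constant or state it explicitly. The only remaining thing to verify is that the Ben-David bound legitimately applies to \emph{conditional} distributions and their finite samples $\mathcal{U}_{\cdot}$, which it does, since that result holds for arbitrary source/target pairs.
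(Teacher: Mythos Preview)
Your argument is correct and arrives at the same bound, but the decomposition differs from the paper's. The paper drops the absolute value via a WLOG assumption on the source ordering and rewrites $\eopdiff{S}(g)=\e_{S_0^0}(g,f)+\e_{S_1^0}(1-g,f)-1$, i.e.\ it treats the second group through the complementary hypothesis $1-g$ and then unrolls the Ben-David chain (Lemmas~\ref{lem:triangle}, \ref{lem:sym_hyp}, \ref{lem:empirical}) inline for each of the two error terms. You instead keep the absolute-value form $\eopdiff{T}(g)=|\e_{T_0^0}(g,f)-\e_{T_1^0}(g,f)|$, insert the source errors by the ordinary triangle inequality, and invoke the full Ben-David bound as a black box on each matched pair $(S_\alpha^0,T_\alpha^0)$, symmetrizing it to get the two-sided estimate. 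Your route is a bit cleaner: it avoids the $1-g$ device and sidesteps a subtlety in the paper's WLOG step (the assumed ordering is on $S$, but the first line of their chain writes $\eopdiff{T}$ without absolute value, which tacitly needs the same ordering on $T$). Conversely, the paper's inline derivation makes transparent exactly where $g^*$ enters, whereas you have to argue separately that replacing the subpopulation-optimal $h$ by the global $g^*$ only loosens the bound---which you do correctly. Your observation about the union bound (that two invocations of Lemma~\ref{lem:empirical} should strictly use $\delta/2$ each, turning $\log(2/\delta)$ into $\log(4/\delta)$) applies equally to the paper's proof; both presentations are tacitly absorbing this into the constant.
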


Using both the definition of $\mathcal{H}$-divergence and symmetric difference hypothesis space, Theorem \ref{thm:eop_transfer_bound} provides a VC-dimension bound on the equal opportunity distance in the target domain given the equal opportunity distance in the source domain.  Due to space limitations, full proofs for all theorems can be found in Appendix~\ref{app:proofs}.

\begin{wrapfigure}[11]{r}{0.5\textwidth}
\vspace{-20pt}
\centering
\includegraphics[width=0.5\textwidth]{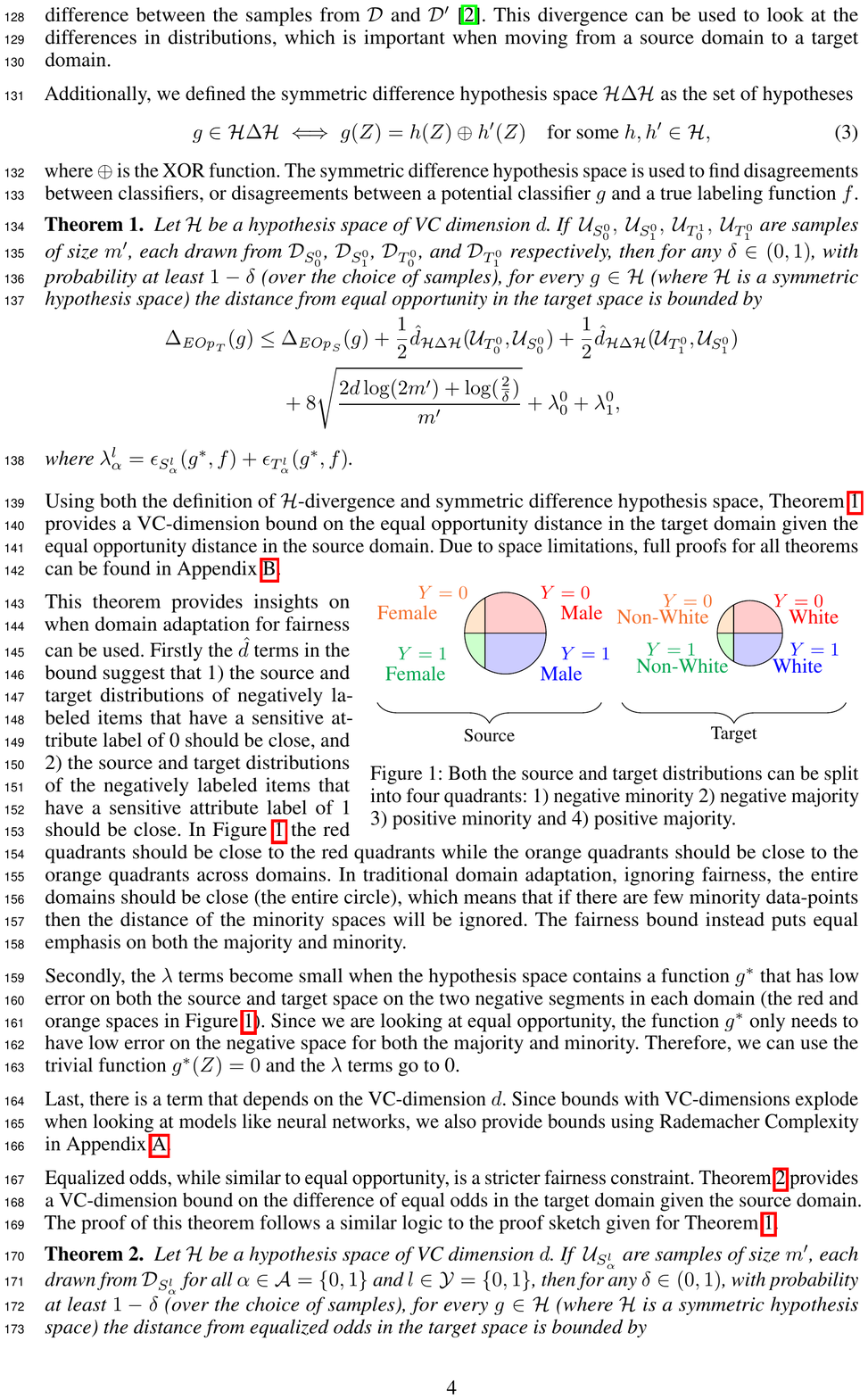}
\caption{Both the source and target distributions can be split into four quadrants: 1) negative minority 2) negative majority 3) positive minority and 4) positive majority.}
\label{fig:quadrants}
\vspace{-0.1in}
\end{wrapfigure}

This theorem provides insights on when domain adaptation for fairness can be used.
Firstly the $\hat{d}$ terms in the bound suggest that 1) the source and target distributions of negatively labeled items that have a sensitive attribute label of 0 should be close, and 2) the source and target distributions of the negatively labeled items that have a sensitive attribute label of 1 should be close. In Figure~\ref{fig:quadrants} the red quadrants should be close to the red quadrants while the orange quadrants should be close to the orange quadrants across domains. In traditional domain adaptation, ignoring fairness, the entire domains should be close (the entire circle), which means that if there are few minority data-points then the distance of the minority spaces will be ignored. The fairness bound instead puts equal emphasis on both the majority and minority.

Secondly, the $\lambda$ terms become small when the hypothesis space contains a function $g^*$ that has low error on both the source and target space on the two negative segments in each domain (the red and orange spaces in Figure~\ref{fig:quadrants}). 
Since we are looking at equal opportunity, the function $g^*$ only needs to have low error on the negative space for both the majority and minority. Therefore, we can use the trivial function $g^*(\sample)=0$ and the $\lambda$ terms go to 0. 

Lastly, Theorem~\ref{thm:eop_transfer_bound} depends on the VC-dimension $\vcd$. Since bounds with VC-dimensions explode with models like neural networks, we also provide bounds using Rademacher Complexity in Appendix~\ref{sec:radmacher}.

Equalized odds, while similar to equal opportunity, is a stricter fairness constraint. Theorem~\ref{thm:eo_transfer_bound} provides a VC-dimension bound on the difference of equal odds in the target domain given the source domain.

\begin{theorem}\label{thm:eo_transfer_bound}
Let $\mathcal{H}$ be a hypothesis space of VC dimension $\vcd$. If $\ \mathcal{U}_{S_\alpha^l}$ are samples of size $m'$, each drawn from $\distro_{S_\alpha^l}$ for all  $\alpha\in \sensset=\{0,1\}$ and  $l\in \taskset=\{0,1\}$, then for any $\delta\in(0,1)$, with probability at least $1-\delta$ (over the choice of samples), for every $g\in\mathcal{H}$ (where $\mathcal{H}$ is a symmetric hypothesis space) the distance from equalized odds in the target space is bounded by
{\small
\begin{align*}
    \eodiff{T}(g) \leq&\ \eodiff{S}(g) + \frac{1}{2}\hat{d}_{\symhyp}(\mathcal{U}_{T_0^0},\mathcal{U}_{S_0^0}) + \frac{1}{2}\hat{d}_{\symhyp}(\mathcal{U}_{T_1^0},\mathcal{U}_{S_1^0}) \\
    &\ + \frac{1}{2}\hat{d}_{\symhyp}(\mathcal{U}_{T_0^1},\mathcal{U}_{S_0^1}) + \frac{1}{2}\hat{d}_{\symhyp}(\mathcal{U}_{T_1^1},\mathcal{U}_{S_1^1})   + 16 \sqrt{\frac{2\vcd\log(2m')+\log(\frac{2}{\delta})}{m'}} + \lambda_\mathit{EO},
\end{align*}
}
where $\lambda_\mathit{EO} = \lambda_0^0 + \lambda_1^0 + \lambda_0^1 + \lambda_1^1$, and $\lambda_\alpha^l=\e_{S_\alpha^l}(g^*,f) + \e_{T_\alpha^l}(g^*,f)$.
\end{theorem}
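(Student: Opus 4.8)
The plan is to reduce Theorem~\ref{thm:eo_transfer_bound} to the equal-opportunity case already established in Theorem~\ref{thm:eop_transfer_bound}, exploiting the fact that the equalized-odds distance splits additively. Writing the target false-positive-rate term as $\mathrm{FPR}_T(g)=\left|\mathbb{E}_{\sample_0^0\sim\distro_{T_0^0}}[g(\sample_0^0)]-\mathbb{E}_{\sample_1^0\sim\distro_{T_1^0}}[g(\sample_1^0)]\right|$ and the false-negative-rate term as $\mathrm{FNR}_T(g)=\left|\mathbb{E}_{\sample_0^1\sim\distro_{T_0^1}}[1-g(\sample_0^1)]-\mathbb{E}_{\sample_1^1\sim\distro_{T_1^1}}[1-g(\sample_1^1)]\right|$, we have $\eodiff{T}(g)=\mathrm{FPR}_T(g)+\mathrm{FNR}_T(g)$, and identically in the source. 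The first summand is exactly the equal-opportunity distance $\eopdiff{T}(g)$, so its contribution (source term, the two divergences over the negative-labeled quadrants, a radical term, and $\lambda_0^0+\lambda_1^0$) is precisely the content of Theorem~\ref{thm:eop_transfer_bound}. The entire task is therefore to establish the mirror-image bound for $\mathrm{FNR}_T(g)$ and add the two halves.

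The key observation enabling the mirror argument is that on the positive-labeled subpopulations the true labeling function satisfies $f\equiv 1$, so $\mathbb{E}_{\sample_\alpha^1\sim\distro_{T_\alpha^1}}[1-g(\sample_\alpha^1)]=\mathbb{E}_{\sample\sim\distro_{T_\alpha^1}}\!\left[|f(\sample)-g(\sample)|\right]=\e_{T_\alpha^1}(g,f)$, and likewise in the source; the symmetry assumption on $\mathcal{H}$ guarantees that the same symmetric-difference class $\symhyp$ governs the disagreements in this direction. Hence $\mathrm{FNR}_T(g)=\left|\e_{T_0^1}(g,f)-\e_{T_1^1}(g,f)\right|$, which I would control with the three-term inequality
\[
\left|\e_{T_0^1}(g,f)-\e_{T_1^1}(g,f)\right|\le \left|\e_{S_0^1}(g,f)-\e_{S_1^1}(g,f)\right|+\left|\e_{T_0^1}(g,f)-\e_{S_0^1}(g,f)\right|+\left|\e_{T_1^1}(g,f)-\e_{S_1^1}(g,f)\right|.
\]
The first right-hand term is $\mathrm{FNR}_S(g)$. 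For each cross-domain term I would run the Ben-David-style argument on subpopulation $(\alpha,1)$: insert the ideal joint hypothesis $g^*$, apply the triangle inequality $\e(g,f)\le\e(g^*,f)+\e(g,g^*)$ for the error pseudometric, and note that the disagreement set of $g$ and $g^*$ is $g\oplus g^*\in\symhyp$, so $\left|\e_{T_\alpha^1}(g,g^*)-\e_{S_\alpha^1}(g,g^*)\right|\le\tfrac12 d_{\symhyp}(\distro_{T_\alpha^1},\distro_{S_\alpha^1})$. This yields $\left|\e_{T_\alpha^1}(g,f)-\e_{S_\alpha^1}(g,f)\right|\le\tfrac12 d_{\symhyp}(\distro_{T_\alpha^1},\distro_{S_\alpha^1})+\lambda_\alpha^1$ with $\lambda_\alpha^1=\e_{S_\alpha^1}(g^*,f)+\e_{T_\alpha^1}(g^*,f)$, exactly mirroring the negative-label derivation.

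Finally I would pass from population to empirical divergences. Standard VC uniform convergence for $\symhyp$ (whose VC dimension is at most $2\vcd$) gives, with high probability for samples of size $m'$, $\tfrac12 d_{\symhyp}(\distro,\distro')\le\tfrac12\hat d_{\symhyp}(\mathcal{U},\mathcal{U}')+4\sqrt{\tfrac{2\vcd\log(2m')+\log(2/\delta)}{m'}}$ — the same concentration underlying Theorem~\ref{thm:eop_transfer_bound}. Applying it to the two FNR divergences contributes an additive $8\sqrt{\tfrac{2\vcd\log(2m')+\log(2/\delta)}{m'}}$; summing with the FPR half gives $\eodiff{S}(g)$ from the two source terms, all four empirical $\symhyp$-divergences at coefficient $\tfrac12$, the two radicals combining to $16\sqrt{\tfrac{2\vcd\log(2m')+\log(2/\delta)}{m'}}$, and $\lambda_0^0+\lambda_1^0+\lambda_0^1+\lambda_1^1=\lambda_\mathit{EO}$, which is the claimed inequality. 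Essentially every step is a routine duplication of the equal-opportunity proof, so the one point requiring genuine care — and the main obstacle — is the probability bookkeeping: the empirical-divergence estimate must hold \emph{simultaneously} for all four (independent) sample pairs, so I would derive both halves inside a single good event via one union bound over the four pairs, absorbing the resulting additive constant into the leading factor rather than composing two separate high-probability statements.
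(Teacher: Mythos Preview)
Your proposal is correct and follows essentially the same approach as the paper: both arguments decompose $\eodiff{T}$ into four subpopulation error terms, insert the ideal hypothesis $g^*$ via the triangle inequality for the error pseudometric, bound each source--target discrepancy by $\tfrac12 d_{\symhyp}$ (Lemma~\ref{lem:sym_hyp}), and then pass to empirical divergences via Lemma~\ref{lem:empirical}. The only organizational difference is that the paper drops the absolute values up front by a WLOG sign assumption and treats all four quadrants in one pass, whereas you keep the absolute values, reuse Theorem~\ref{thm:eop_transfer_bound} verbatim for the FPR half, and handle the FNR half with the three-term inequality $|a-b|\le|c-d|+|a-c|+|b-d|$; your explicit remark about the union bound over the four sample pairs is a point the paper leaves implicit.
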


The $\hat{d}_{\symhyp}$ terms suggest, that in order for equalized odds to transfer successfully then, 1) the source and target distributions of negatively labeled items on \textit{both} sensitive attribute labels 0 and 1 should be close, 2) the source and target distributions of the positively labeled items on \textit{both} sensitive attribute labels 0 and 1 should be close. In other words, all four quadrants of the source should individually be close to the respective four quadrants of the target in Figure~\ref{fig:quadrants}. 

Additionally, the $\lambda$ term shows that there should be a hypothesis that performs well over \emph{all} of these subspaces. This implication is intuitive given that equalized odds, by definition, wants a classifier to perform well in both the negative and positive space across both groups.

\section{Modeling to Transfer Fairness}
\label{sec:model}

\begin{wrapfigure}[13]{r}{0.4\textwidth}
\vspace{-60pt}
\centering
\includegraphics[width=0.4\textwidth]{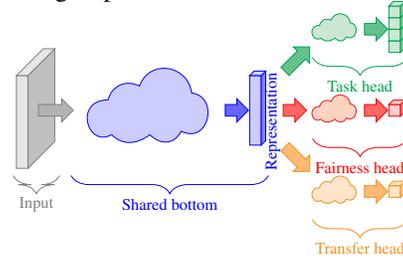}
\caption{At a high level, our general framework combines a primary training objective, a fairness objective, and a transfer objective to improve fairness goals in a target domain.  Table \ref{tab:terms} provides mathematical details for different configurations.
}
\label{fig:model_abstraction}
\end{wrapfigure}

With this theoretical understanding, how should we change our training?  As motivated previously, we consider the case where we have a small amount of labelled data (both labels $\taskset$ and sensitive attributes $\sensset$) in the target domain and a large amount of labelled data in the source domain.

As shown in the previous section, equality of opportunity will transfer \emph{if} the distance between the respective distributions of source and target are close together as visually portrayed in Figure \ref{fig:quadrants}.  
\citet{ganin2016domain} proved that traditional domain adaptation can be framed as minimizing the distance between source and target with adversarial training.  \cite{DBLP:journals/corr/LouizosSLWZ15,DBLP:journals/corr/EdwardsS15,Beutel17:Data,Li18:Towards} similarly have applied adversarial training to achieve fairness goals, and \citet{Madras18:Learning} proved that equality of odds can be optimized with adversarial training similar to domain adaptation.

We build on this intuition to design a learning objective for transferring equality of opportunity to a target domain.
Adversarial training conceptually enables minimizing a $\hat{d}$ term from Theorem \ref{thm:eop_transfer_bound}; and $\fairdiff{S}$ can be optimized using \cite{Beutel17:Data,Madras18:Learning} or one of the other myriad of traditional fairness learning objectives.  As such, we begin with the following loss:
{\small
\begin{align} 
    \min&\left[\sum_{\sample\sim(\distro_S\cup\distro_T)} L_Y(g(h(\sample)),f(\sample)) + \sum_{(A,\sample^0)\sim\distro_{S^0}} \lambda_{\textit{Fair}}L_A\left(a(h(\sample^0)),A\right) \right. \nonumber\\
    &\left.+\sum_{(d,\sample_0^0)\sim\left(\distro_{S_0^0}\cup\distro_{T_0^0}\right)}  \lambda_{\textit{DA}}L_d\left(d(h(\sample_0^0)),d\right) +\sum_{(d,\sample_1^0)\sim\left(\distro_{S_1^0}\cup\distro_{T_1^0}\right)}  \lambda_{\textit{DA}}L_d\left(d(h(\sample_1^0)),d\right) \right] \label{eq:adversarial_loss_function},
\end{align}
}%
where $L_Y(g(h(\sample)),f(\sample))$ is the loss function training $g(h(\sample))$ over hidden representation $h(\sample)$ to predict the task label $f(\sample)$.  To optimize $\fairdiff{S}$,  
$a(h(\sample^0))$ tries to predict the sensitive attribute $A$ from the source
and $L_A\left(a(h(\sample^0)),A\right)$ provides an adversarial loss that includes a negated gradient on $h$ following \cite{Beutel17:Data}.  For transfer, we minimize $\hat{d}$ terms by including another adversarial loss $L_d\left(d(h(\sample_l^\alpha)),d\right)$, where $d(h(\sample_l^\alpha))$ tries to predict whether a sample comes from the source or target domain. Each of these loss components maps to terms in Theorem~\ref{thm:eop_transfer_bound} as laid out in Table~\ref{tab:terms}.

\begin{table}
    \centering
    \begin{tabular}{|c|c|c|c|}
    \hline
        Loss Term & Theorem 1 & Adversarial  (Eq.~\ref{eq:adversarial_loss_function}) & Regularization (Eq.~\ref{eq:loss_fun}) \\
        \hline
        Fairness head & {\small$\eopdiff{S}(g)$} & {\small$\lambda_{\textit{Fair}}L_A\left(a(h(\sample^0)),A\right)$} & {\small$\lambda_{\textit{Fair}}L_{\mathit{MMD}}\left(a(h(\sample^0)),A\right)$} \\
        \hline
        \multirow{2}{*}{Transfer head} & {\small$\hat{d}_{\symhyp}(\mathcal{U}_{T_0^0},\mathcal{U}_{S_0^0})$} & {\small$\lambda_{\textit{DA}}L_d\left(d(h(\sample_0^0)),d\right)$} & \multirow{2}{*}{{\small$\lambda_{\textit{DA}}L_{\mathit{MMD}}\left(d(h(\sample^0)),d\right)$}} \\
        \cline{2-3}
         & {\small$\hat{d}_{\symhyp}(\mathcal{U}_{T_1^0},\mathcal{U}_{S_1^0})$} & {\small$\lambda_{\textit{DA}}L_d\left(d(h(\sample_1^0)),d\right)$} & \\
        \hline
    \end{tabular}
    \caption{Relationship between terms in Theorem~\ref{thm:eop_transfer_bound} and Loss functions }
    \label{tab:terms}
    \vspace{-0.2in}
\end{table}

Recently, \citet{Zhang18:Mitigating} used adversarial training on a one dimensional representation of the data (effectively the model's prediction).  From this perspective, we can use a wide variety of losses over predictions to replace adversarial losses, such as \cite{DBLP:conf/aistats/ZafarVGG17,beutel2019putting} minimizing the correlation between group and the one dimensional representation of the data.  Like previous work, we find that these approaches to be more stable and still effective in comparison to adversarial training, despite not being provably optimal.  In our experiments we use a MMD loss \cite{smola12,long15,DBLP:conf/nips/BousmalisTSKE16} over predictions:
{\small
\begin{align}
    \min & \left[ \sum_{Z\in \distro_S\cup \distro_T} L_Y(f(\sample),g(\sample)) + \sum_{(A,\sample^0)\sim\distro_{S^0}} \lambda_{\textit{Fair}}L_{\mathit{MMD}}\left(a(h(\sample^0)),A\right) \right. \nonumber \\
    & + \left. \sum_{(d,\sample^0)\sim\left(\distro_{S^0}\cup\distro_{T^0}\right)}  \lambda_{\textit{DA}}L_{\mathit{MMD}}\left(d(h(\sample^0)),d\right)\right] \label{eq:loss_fun},
\end{align}
}%
where $\lambda_{\textit{Fair}}L_{\mathit{MMD}}\left(a(h(\sample^0)),A\right)$ is the MMD regularization over the sensitive attributes in the source domain, $\lambda_{\textit{DA}}L_{\mathit{MMD}}\left(d(h(\sample^0)),d\right)$ is the MMD regularization over source/target membership. Again Table~\ref{tab:terms} maps the terms in Eq.~\ref{eq:loss_fun} to those in Theorem~\ref{thm:eop_transfer_bound}. 

Care must be taken when performing domain adaptation with regards to fairness. Either multiple transfer heads should be included in the loss for all necessary quadrants (See Figure~\ref{fig:quadrants} and Eq.~\ref{eq:adversarial_loss_function}), or balanced data -- equally representing all necessary quadrants -- should be used as in \cite{Madras18:Learning} and Eq.~\ref{eq:loss_fun}. 
Experiments in this paper use the MMD regularization as in Eq.~\ref{eq:loss_fun} and balanced data is used for both the fairness head as well as the transfer heads.

\vspace{-2mm}
\section{Experiments}
To better understand the theoretical results presented above, we now present both synthetic and realistic experiments exploring tightness of our theoretical bound as well as the ability to improve the transfer of fairness across domains during model training.

\vspace{-2mm}
\subsection{Synthetic Examples}
\begin{figure}[th]
    \centering
    \begin{subfigure}{0.18\columnwidth}
        \includegraphics[width=\textwidth]{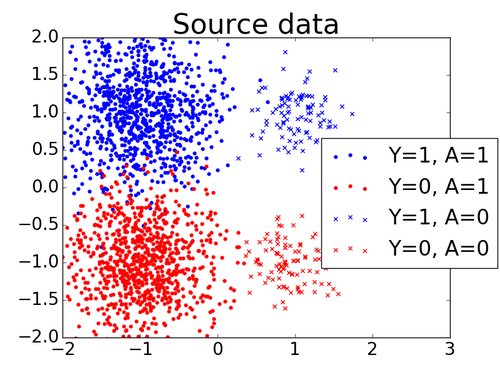}
        \caption{Source}
        \label{fig:source_data}
    \end{subfigure}
    \hspace{0.01\columnwidth}
    \begin{subfigure}{0.18\columnwidth}
        \includegraphics[width=\textwidth]{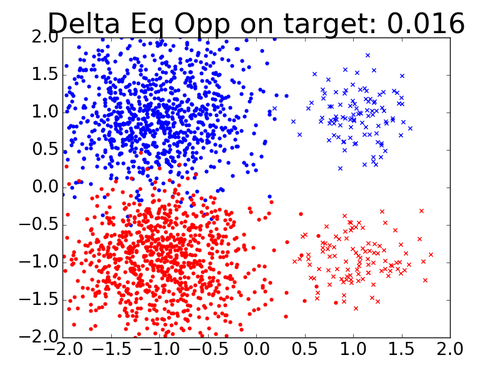}
       \caption{Target -1}
        \label{fig:center-10}
        \end{subfigure}
    \hspace{0.01\columnwidth}
    \begin{subfigure}{0.18\columnwidth}
        \includegraphics[width=\textwidth]{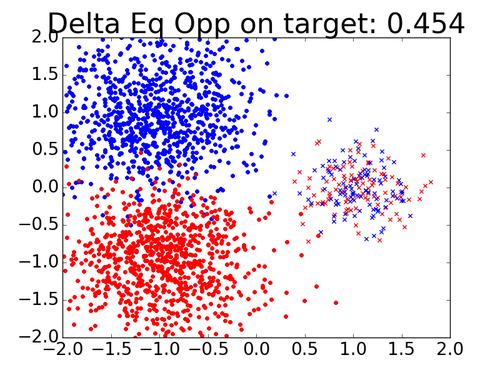}
        \caption{Target 0}
        \label{fig:center00}
    \end{subfigure}
    \hspace{0.01\columnwidth}
    \begin{subfigure}{0.18\columnwidth}
        \includegraphics[width=\textwidth]{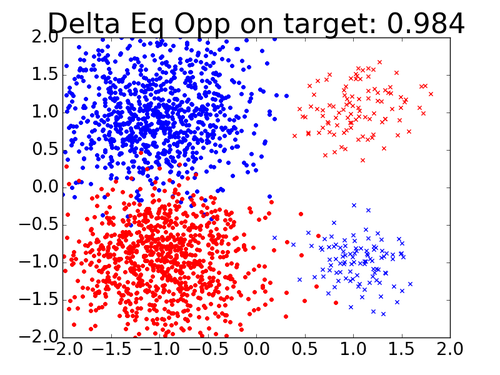}
        \caption{Target 1}
        \label{fig:center10}
    \end{subfigure}
    \hspace{0.01\columnwidth}
    \begin{subfigure}{0.18\columnwidth}
        \includegraphics[width=\textwidth]{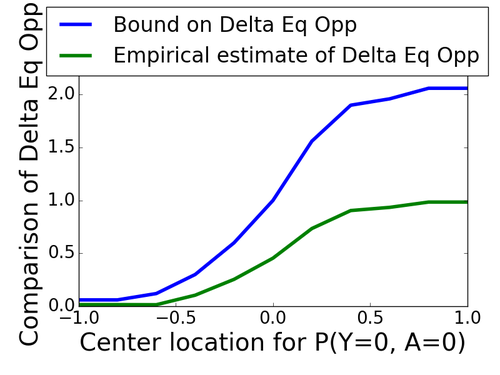}
        \caption{Target Fairness}
        \label{fig:compare_bound}
    \end{subfigure}
    \caption{Synthetic examples showing how distribution difference of $P(\sample|\tasklabel, \sensfeat=0)$ in the target domain affects theoretical and empirical equality of opportunity (best viewed in color). 
    In the title of each plot we give the equal opportunity distance $\eopdiff{T}(g)$ in the target domain.}
    \label{syn_example}
    \vspace{-0.1in}
\end{figure}

We show how well the theoretical bounds align with actual transfer of fairness.  
A synthetic dataset is used to examine how 
the distribution distance terms $\hat{d}_{\symhyp}(\mathcal{U}_{T_{\sensfeat=0}^{\tasklabel=0}},\mathcal{U}_{S_{\sensfeat=0}^{\tasklabel=0}})$ and $\hat{d}_{\symhyp}(\mathcal{U}_{T_{\sensfeat=1}^{\tasklabel=0}},\mathcal{U}_{S_{\sensfeat=1}^{\tasklabel=0}})$ in Eq.~\eqref{thm:eop_transfer_bound} affect the fairness distance of equal opportunity $\eopdiff{T}(g)$.

In this synthetic example, we generate data $\sample \in \mathbb{R}^2$ using Gaussian distributions.  As we can see in Figure \ref{fig:source_data}, the source domain consists of four Gaussians, with $\tasklabel=1$ largely lying above $\tasklabel=0$ and $\sensfeat=1$ lying to the left of $\sensfeat=0$; $\sensfeat=1$ is the majority of the data ($\sigma=0.5$ with $900$ samples). For $\sensfeat=0$, the data is generated using $\sigma=0.3$ with $100$ samples.
The target domain, like the source domain, consists of majority data with $\sensfeat=1$ and the data from $\sensfeat=1$ is generated from the same distribution in both domains: $\mathcal{U}_{T_{\sensfeat=1}^{\tasklabel=0}} \sim \mathcal{N}([-1,-1], \sigma)$ and $\mathcal{U}_{T_{\sensfeat=1}^{\tasklabel=1}} \sim \mathcal{N}([-1,1], \sigma)$.
However, in order to understand the transfer of fairness, we shift the distributions of $\mathcal{U}_{T_{\sensfeat=0}^{\tasklabel=0}} \sim \mathcal{N}([1,c],\sigma) $ and $\mathcal{U}_{T_{\sensfeat=0}^{\tasklabel=1}} \sim \mathcal{N}([1,-c],\sigma)$ in the target domain ($c=-1, 0, 1$ for \ref{fig:center-10}, \ref{fig:center00} and \ref{fig:center10}, respectively). 
By varying the overlap between these distributions, and their alignment with the source data, we are able to understand the relationship between the $\hat{d}_{\symhyp}$ terms above and the fairness distance of equal opportunity $\eopdiff{T}(g)$.
For each setting, we train linear classifiers on the source domain and examine the performance in the target domain.

\vspace{-10pt}
\paragraph{Qualitative Analysis}
We see in Fig.~\ref{fig:center-10} that when the distribution $P(\sample|\tasklabel=0, \sensfeat=0)$ across domains is close, thus a smaller $\hat{d}_{\symhyp}(\mathcal{U}_{T_0^0},\mathcal{U}_{S_0^0})$, there is better transfer of fairness the source to the target domain, seen in the smaller $\eopdiff{T}(g)$.
As the distribution distance gets larger, the $\eopdiff{T}(g)$ also increases.
Consider the worst case of a sign flip for the minority $\sensfeat=0$, as shown in Fig.~\ref{fig:center10}: the FPR for the majority $\sensfeat=1$ is close to $0\%$, while the FPR for the minority $\sensfeat=0$ is close to $100\%$. 

\vspace{-10pt}
\paragraph{Quantitative Analysis}
In Figure \ref{fig:compare_bound}, we compare the derived bound of $\eopdiff{T}(g)$ (Eq.~\ref{thm:eop_transfer_bound}) with its empirical estimate as we vary $c$\footnote{As in \cite{Ben07:Analysis},  $\hat{d}_{\symhyp}(\mathcal{U}_{T_0^0},\mathcal{U}_{S_0^0})$ is estimated by a linear classifier trained on samples $\mathcal{U}_{T_0^0},\mathcal{U}_{S_0^0}$. The plot omits the VC term for simplicity, which is relatively small when sample size $m'$ is large and VC-dimension $\vcd$ is low.}.
As shown in Figure~\ref{fig:compare_bound}, the theoretical bound on the equal opportunity distance is close to the observed equal opportunity distance when the distance between the negative minority space across domains, $\hat{d}(\mathcal{U}_{T_0^0}, \mathcal{U}_{S_0^0})$, is small. 
This suggests, minimizing the domain distance terms in Eq.~\ref{thm:eop_transfer_bound} could lead to a better equal opportunity transfer.

\vspace{-2mm}
\subsection{Real Data}
\begin{figure}
    \centering
    \begin{subfigure}{0.23\columnwidth}
        \includegraphics[width=\textwidth]{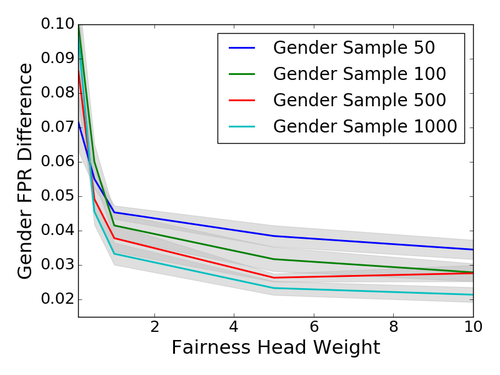}
        \caption{Effect of fairness head: Improving $\eopdiff{\rm gender}$ with varying number of gender-balanced samples.}
        \label{fig:fpr_diff_gender_gender_data_only}
    \end{subfigure}
    \hspace{0.01\columnwidth}
    \begin{subfigure}{0.23\columnwidth}
        \centering
        \includegraphics[width=\textwidth]{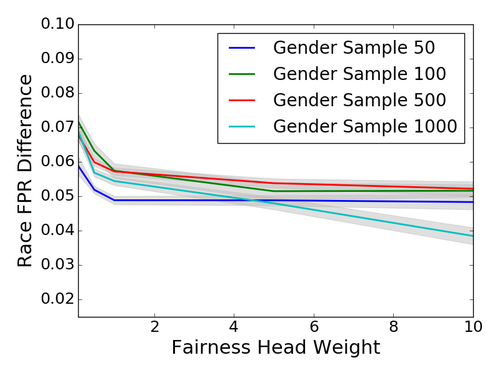}
        \caption{Some natural transfer occurring without explicit transfer: $\eopdiff{\rm race}$ is improved with gender data.}
        \label{fig:fpr_diff_race_gender_data_only}
    \end{subfigure}
    \hspace{0.01\columnwidth}
    \begin{subfigure}{0.23\columnwidth}
        \includegraphics[width=\textwidth]{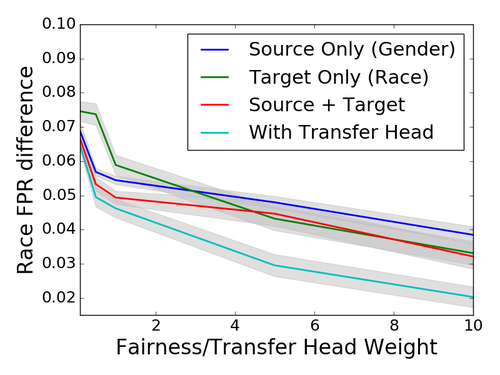}
        \caption{Effect of transfer head: better transfer from gender (1000 samples) to race (50 samples).}
        \label{fig:transfer_gender_to_race_50}
    \end{subfigure}
    \hspace{0.01\columnwidth}
    \begin{subfigure}{0.23\columnwidth}
        \includegraphics[width=\textwidth]{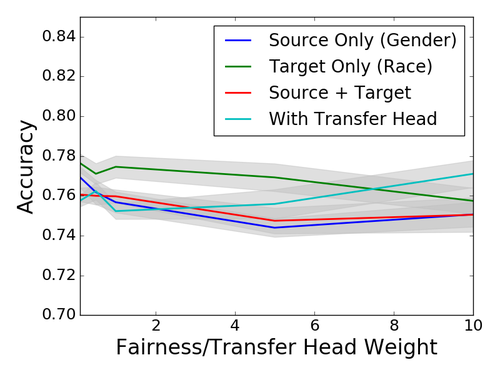}
        \caption{Accuracy graph for transferring from gender (1000 samples) to race (50 samples).}
    \end{subfigure}
    \caption{Effect of fairness/transfer head on the UCI data. The shaded areas show the standard error of the mean across trials. Note the head weight (x-axis) starts from $0.1$.}
    \vspace{-0.2in}
\end{figure}
We now explore how and when our proposed modeling approach in Section \ref{sec:model} facilitates the transfer of fairness from the source to the target domain on two real-world datasets.
Note, we use these datasets exclusively for understanding our theory and model, and \emph{not} as a comment on when or if the proposed tasks and their application are appropriate, as in \cite{DBLP:journals/corr/abs-1803-02453}.

\textbf{\dataone:} 
The UCI Adult\footnote{https://archive.ics.uci.edu/ml/datasets/adult} dataset contains census information of over 40,000 adults from the 1994 Census, with the task of determining income brackets of $>\!\$50,000$ or $\leq\!\$50,000$. We focus on two sensitive attributes: binary valued gender,  and race, converted to binary values [`white', `non-white'] as done by \citet{Madras18:Learning}.

\textbf{\datatwo:} 
As in \cite{DBLP:journals/corr/abs-1803-02453} we use 
ProPublica’s COMPAS recidivism data\footnote{https://github.com/propublica/compas-analysis} 
to try to predict recidivism for over 10,000 defendants based on age, gender, demographics, prior crime count, etc. We again focus on two sensitive attributes: gender and race (binarized to [`white', `non-white']).  
\vspace{-10pt}
\paragraph{Experiment Setup}
For both datasets, cross-validation is used to choose the hyper-parameters. Comparable baseline accuracy (around $84\%$ for \dataone and $80\%$ for \datatwo, see appendix~\ref{appendix_exp} for more details) is achieved with $64$ embedding dimension for categorical features, single hidden layer with $256$ shared hidden units, $512$ batch size, $0.1$ learning rate with Adagrad optimizer, and $10,000$ epochs for training.
We perform $30$ runs for each set of experiments and average over the results.
\vspace{-10pt}
\paragraph{Sparsity Issues and Natural Transfer}
We examine the effectiveness of just the fairness heads in the proposed model.  The amount of gender-balanced data created for the fairness head is varied to observe how applying the fairness head affects the FPR difference.

We examine how this procedure effects the FPR difference across genders (i.e., the FPR difference between ``Female'' and ``Male'' examples).
Figure \ref{fig:fpr_diff_gender_gender_data_only} shows that the fairness head works as expected: with sufficient data and a large enough weight, the fairness head is able to improve the FPR gap across genders.  Further, we find that with very few examples on which to apply the fairness head, the gender FPR gap does not close.  This aligns with previous results found in \cite{Beutel17:Data,Madras18:Learning,beutel2019putting}.

Second, we examine how running the fairness head on gender affects the FPR gap across race.  As shown in Figure \ref{fig:fpr_diff_race_gender_data_only}, there is a natural transfer of equal opportunity from gender to race -- applying a fairness loss with respect to gender also improves the fairness of the model with respect to race.  This highlights that sometimes there is a natural transfer of equal opportunity, presenting general value in improving the FPR gap with respect to gender, and no explicit transfer optimization is needed.  (Similar to the transfer questions posed previously by \citet{Madras18:Learning} and \citet{DBLP:journals/corr/abs-1806-11212}).

\vspace{-10pt}
\paragraph{Effectiveness of Transfer Head}
We now explore how adding the transfer head can further improve equality of opportunity in the target domain.
We compare four different model arrangements: 
(1) \textbf{Source Only}: We only add a fairness head for the source domain; (2) \textbf{Target Only}: We only add a fairness head for the target domain; (3) \textbf{Source+Target}: We add two fairness heads, one for source and for target; (4) \textbf{Transfer}: We include three heads -- both source and target fairness heads as well as the transfer head for equality of opportunity.

\textit{Experiment setting:} As in typical transfer learning setting, we will focus on the case where we observe a large number of samples in the source domain (e.g., 1000 for each race ``white'' and ``non-white''), but a smaller sample size in the target domain (e.g., 100 for each gender ``male'' and ``female''), and the same for gender to race.  We explore equality of opportunity with respect to FPR in the target domain, as we vary the weight on the fairness and transfer heads.

\textit{Results:} Figure~\ref{fig:transfer_gender_to_race_50} shows that including the transfer head results in a better equal opportunity transfer, compared to the same setting without transfer (Figure~\ref{fig:fpr_diff_race_gender_data_only}).
Table \ref{tab:comarison_table} summarizes the full results on both datasets. We can see that including both the fairness heads and the transfer head consistently gives the best improvement in equal opportunity (FPR difference) in almost all cases.

\vspace{-10pt}
\paragraph{Effect of Target Sample Size}
Last, we consider how the amount of data from the target domain affects our ability to improve equal opportunity there, as sample efficiency is a core challenge.

\textit{Experiment setting:} We follow a similar experimental procedure as before with two modifications.  First, we vary the number of samples we observe for each sensitive group in the target domain to be in $\{50,100, 500,1000\}$.  We examine the efficacy of the four approaches depending on the amount of data available for debiasing in the target domain.  Second, this analysis is performed for both transferring from race (source) to gender (target), as well as from gender (source) to race (target).

\textit{Results:} Table \ref{tab:comarison_table} summarizes the results.
Applying the fairness and transfer heads to the large amount of source data closes the FPR gap in the target domain.  
Increasing the amount of data in the target domain significantly helps the performance of the ``Target Only'' and the ``Source+Target'' models.  This is intuitive since directly debiasing in the target domain is feasible with sufficient data.  With sufficient data, the results converge to be approximately equivalent to the transfer model.  

These experiments show that the transfer model is effective in decreasing the FPR gap in the target domain and is more sample efficient than previous methods.

\begin{table}[tbh]
\scriptsize
    \centering
    \begin{tabular}{|c|c|c|c|c|c|c|}
    \hline
    & & & \multicolumn{4}{c|}{Smallest FPR difference achieved on Target (FPR-diff $\pm$ std. dev)} \\
    \hline
     & \shortstack{Source to \\Target} & \shortstack{\#Target\\ Samples} &
    Source only &  Target only &  Source + Target &  \shortstack{With Transfer\\ Head}\\
    \hline
         \multirow{8}{*}{\dataone} 
         & \multirow{4}{*}{\shortstack{Gender \\to\\ Race}} 
         & 50 & $0.038\pm0.013$ & $0.033\pm0.019$ & $0.032\pm0.020$ & $\bm{0.020\pm 0.016}$ \\ 
         \cline{3-7} & & 100 & $\bm{0.038\pm0.013}$ & $\bm{0.038\pm0.021}$ & $0.044\pm0.024$ & $0.040\pm0.024$\\  
         \cline{3-7} & & 500 & $0.038\pm0.013$ & $0.053\pm 0.010$ & $0.043\pm0.017$ & $\bm{0.025\pm0.018}$\\ 
         \cline{3-7} & & 1000 & $0.038\pm0.013$ & $\bm{0.027\pm0.018}$ & $\bm{0.027\pm0.019}$ & $0.031\pm0.021$\\
         \cline{2-7} & \multirow{4}{*}{\shortstack{Race \\to\\ Gender}} 
         & 50 &  $0.061\pm0.054$ & $0.035\pm0.015$ & $0.020\pm0.026$ & $\bm{0.008\pm0.009}$  \\ 
         \cline{3-7} & & 100 & $0.061\pm0.054$ & $0.028\pm0.014$ & $0.021\pm0.015$ & $\bm{0.009\pm0.011}$ \\  
         \cline{3-7} & & 500 & $0.061\pm0.054$ & $0.028\pm0.013$ & $0.019\pm0.013$ & $\bm{0.014\pm0.011}$ \\ 
         \cline{3-7} & & 1000 & $0.061\pm0.054$ & $0.021\pm0.012$ & $\bm{0.015\pm0.014}$ & $0.020\pm0.014$ \\
         \hline
          \multirow{8}{*}{\datatwo} 
         & \multirow{4}{*}{\shortstack{Gender \\to\\ Race}} 
         & 50 & $0.027\pm0.008$ & $0.041\pm0.006$ & $0.009\pm0.004$ & $\bm{0.001\pm 0.001}$\\ 
         \cline{3-7} & & 100 & $0.027\pm0.008$ & $0.036\pm0.007$ & $0.005\pm0.005$ & $\bm{0.003\pm 0.001}$ \\  
         \cline{3-7} & & 500 & $0.027\pm0.008$ & $0.038\pm0.008$ & $0.003\pm0.002$ & $\bm{0.001\pm 0.001}$ \\ 
         \cline{3-7} & & 1000 & $0.027\pm0.008$ & $0.021\pm0.005$ & $0.006\pm0.005$ & $\bm{0.002\pm 0.001}$ \\
         \cline{2-7} & \multirow{4}{*}{\shortstack{Race \\to\\ Gender}} 
         & 50 & $0.040\pm0.004$ & $0.070\pm0.005$ & $0.035\pm0.004$ & $\bm{0.019\pm0.002}$\\ 
         \cline{3-7} & & 100 & $0.040\pm0.004$ & $0.055\pm0.007$ & $0.034\pm0.003$ & $\bm{0.017\pm0.002}$ \\ 
         \cline{3-7} & & 500 & $0.040\pm0.004$ & $0.042\pm0.008$ & $0.027\pm0.004$ & $\bm{0.019\pm0.002}$ \\ 
         \cline{3-7} & & 1000 & $0.040\pm0.004$ & $0.034\pm0.011$ & $0.028\pm0.004$ & $\bm{0.018\pm0.002}$ \\
         \hline
    \end{tabular}
    \hspace{0.01in}
    \caption{Comparison between the proposed model and the baselines. The numbers in bold indicate the smallest FPR difference achieved in the target domain w.r.t. varying number of target samples.}
    \label{tab:comarison_table}
    \vspace{-0.3in}
\end{table}

\section{Conclusion}

In this paper we provide the first theoretical examination of transfer of machine learning fairness across domains. We adopt a general formulation of domain adaptation for fairness that covers a wide variety of fairness challenges, from proxies of sensitive attributes, to applying models in unanticipated settings.  
Within this general formulation, we have provided theoretical bounds on the transfer of fairness for equal opportunity and equalized odds using both VC-dimension and Rademacher Complexity.  Based on this theory, we developed a new modeling approach to transfer fairness to a given target domain.
In experiments we validate our theoretical results and demonstrate that our modeling approach is more sample efficient in improving fairness metrics in a target domain.

\bibliographystyle{abbrvnat}
\bibliography{references}

\clearpage

\appendix

\section{Rademacher Complexity}\label{sec:radmacher}

We provide additional bounds dependent on Radmacher Complexity based on the following definition of data-driven empirical Rademacher Complexity

\begin{definition}
Given a hypothesis space $\mathcal{H}$, a sample $S\in\mathcal{X}^m$, the empirical Rademacher Complexity of $\mathcal{H}$ is defined as
\begin{equation*}
\hat{\mathfrak{R}}_S(\mathcal{H})=\frac{2}{m}\mathbb{E}_\sigma\left[ \sup_{h\in\mathcal{H}} \left. | \sum_{i=1}^m \sigma_ih(x_i) | \right| S = (x_1, \ldots, x_m)  \right].
\end{equation*}
The expectation is taken over $\sigma=(\sigma_1, \ldots, \sigma_m)$ where $\sigma_i\in\{-1,+1\}$ are uniform independent random variables. The Rademacher Complexity of a hypothesis space is defined as the expectation of $\hat{\mathfrak{R}}$ over all sample sets of size $m$
\begin{equation}
\mathfrak{R}_m(\mathcal{H}) = \mathbb{E}_S\left[ \left. \hat{\mathfrak{R}}_S(\mathcal{H}) \right| |S|=m\right].
\end{equation}
\end{definition}

Rademacher Complexity measures the ability of a hypothesis space to fit random noise. The empirical Rademacher Complexity function allows us to estimate the Rademacher Complexity using a finite sample of data. Rademacher Complexity bounds can lead to tighter bounds than those of VC-dimension, especially when analyzing neural network models.

When transitioning to Rademacher Complexity we need to change the binary labels from $\{0,1\}$ to $\{-1,1\}$. This means that the error of a hypothesis $g$ is defined as
$$
\e_{S_\alpha^l}(g,f) = \mathbb{E}_{z_\alpha^l \sim D_{S_\alpha^l}} \left[ \frac{|g(z_\alpha^l) - f(z_\alpha^l)|}{2} \right].
$$

Additionally, we need new definitions of the equal opportunity and equalized odds distances over the new binary group membership. The equal opportunity distance is defined as
\begin{align*}
\eopdiff{S}(g) \triangleq&\ \mathbb{E}_{Z_0^{-1} \sim D_{S_0^{-1}}} \left[\frac{1+g(z_0^{-1})}{2}\right] - \mathbb{E}_{Z_1^{-1} \sim D_{S_1^{-1}}} \left[\frac{1+g(z_1^{-1})}{2}\right],
\end{align*}
while the equlized odds distance is defined as
\begin{align*}
\eodiff{T}(g) \triangleq&\ \left| \mathbb{E}_{Z_0^{-1}\sim D_{T_0^{-1}}}\left[\frac{1+g(z_0^{-1})}{2}\right] - \mathbb{E}_{Z_1^{-1}\sim D_{T_1^{-1}}}\left[\frac{1+g(z_1^{-1})}{2}\right] \right| \\
&\ + \left| \mathbb{E}_{Z_0^{1}\sim D_{T_0^{1}}}\left[\frac{1+g(z_0^{1})}{2}\right] - \mathbb{E}_{Z_1^{1}\sim D_{T_1^{1}}}\left[\frac{1+g(z_1^{1})}{2}\right] \right|.
\end{align*}

Using these new definitions Theorem~\ref{thm:eop_transfer_bound_rademacher_app} provides a Rademacher Complexity bound of the equal opportunity distance in the target space. This closely resembles the VC-dimension bound in Theorem~\ref{thm:eop_transfer_bound}.

\begin{theorem}\label{thm:eop_transfer_bound_rademacher_app}
Let $\mathcal{H}$ be a hypothesis space. If $\ \mathcal{U}_{S_0^{-1}},\ \mathcal{U}_{S_1^{-1}},\ \mathcal{U}_{T_0^{-1}},\  \mathcal{U}_{T_1^{-1}}$ are samples of size $m'$, each drawn from $\mathcal{D}_{S_0^{-1}}$, $\mathcal{D}_{S_1^{-1}}$, $\mathcal{D}_{T_0^{-1}}$, and $\mathcal{D}_{T_1^{-1}}$ respectively, then for any $\delta\in(0,1)$, with probability at least $1-\delta$ (over the choice of samples), for every $g\in\mathcal{H}$ (where $\mathcal{H}$ is a symmetric hypothesis space) the distance from equal opportunity in the target space is bounded by
\begin{align*}
    \eopdiff{T}(g) \leq&\ \eopdiff{S}(g) + \frac{1}{2}\hat{d}_{\symhyp}(\mathcal{U}_{T_0^{-1}},\mathcal{U}_{S_0^{-1}}) + \frac{1}{2}\hat{d}_{\symhyp}(\mathcal{U}_{T_1^{-1}},\mathcal{U}_{S_1^{-1}}) \\
    &\ + 2\left(\mathfrak{R}_{U_{T_0^{-1}}}(\mathcal{H}) + \mathfrak{R}_{U_{S_0^{-1}}}(\mathcal{H}) + \mathfrak{R}_{U_{T_1^{-1}}}(\mathcal{H}) + \mathfrak{R}_{U_{S_1^{-1}}}(\mathcal{H})\right) \\
    &\ + 6\sqrt{\frac{\log\frac{2}{\delta}}{2m}}
    + \lambda_0^{-1} + \lambda_1^{-1},
\end{align*}
where $\lambda_\alpha^l=\e_{S_\alpha^l}(g^*,f)+\e_{T_\alpha^l}(g^*,f)$.
\end{theorem}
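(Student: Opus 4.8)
The plan is to follow the same skeleton as the VC-dimension argument for Theorem~\ref{thm:eop_transfer_bound}, replacing only the uniform-convergence step with a Rademacher-complexity one. First I would reduce the fairness distance to a difference of conditional errors. Since the samples $\mathcal{U}_{S_\alpha^{-1}}$ and $\mathcal{U}_{T_\alpha^{-1}}$ are drawn conditioned on the true label being negative, the true labeling function satisfies $f(z)=-1$ on their supports, so for $g$ valued in $\{-1,1\}$ we have $\mathbb{E}_{z\sim\mathcal{D}_{S_\alpha^{-1}}}[\tfrac{1+g(z)}{2}] = \e_{S_\alpha^{-1}}(g,f)$, and likewise for the target. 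Hence $\eopdiff{T}(g) = \e_{T_0^{-1}}(g,f) - \e_{T_1^{-1}}(g,f)$ and $\eopdiff{S}(g) = \e_{S_0^{-1}}(g,f) - \e_{S_1^{-1}}(g,f)$, turning the goal into a statement purely about conditional errors. (When the two groups appear in the opposite order the same argument applies after swapping their roles, which is what recovers the absolute value.)

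Next, on each negative-class subspace I would invoke the Ben-David-style domain-adaptation bound using the ideal joint hypothesis $g^*$ and the pseudo-metric triangle inequality for $\e$. For the group-$0$ subspace this gives the upper bound
\[ \e_{T_0^{-1}}(g,f) \le \e_{S_0^{-1}}(g,f) + \tfrac12 d_{\symhyp}(\mathcal{D}_{T_0^{-1}},\mathcal{D}_{S_0^{-1}}) + \lambda_0^{-1}, \]
obtained from $\e_{T_0^{-1}}(g,f)\le \e_{T_0^{-1}}(g^*,f)+\e_{S_0^{-1}}(g,g^*)+|\e_{T_0^{-1}}(g,g^*)-\e_{S_0^{-1}}(g,g^*)|$ together with the divergence bound on the last term and a second triangle inequality. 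Applying the symmetric inequality on the group-$1$ subspace (and using that $d_{\symhyp}$ is symmetric) yields the matching lower bound $\e_{T_1^{-1}}(g,f)\ge \e_{S_1^{-1}}(g,f) - \tfrac12 d_{\symhyp}(\mathcal{D}_{T_1^{-1}},\mathcal{D}_{S_1^{-1}}) - \lambda_1^{-1}$. Subtracting the second from the first and using the reductions above produces the population-level inequality $\eopdiff{T}(g)\le \eopdiff{S}(g) + \tfrac12 d_{\symhyp}(\mathcal{D}_{T_0^{-1}},\mathcal{D}_{S_0^{-1}}) + \tfrac12 d_{\symhyp}(\mathcal{D}_{T_1^{-1}},\mathcal{D}_{S_1^{-1}}) + \lambda_0^{-1} + \lambda_1^{-1}$, in which the divergences are still exact distribution-level quantities. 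This portion is identical in spirit to Theorem~\ref{thm:eop_transfer_bound} and carries over once the labels are recoded to $\{-1,1\}$.

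Finally I would pass from the exact divergences $d_{\symhyp}$ to their empirical estimates $\hat d_{\symhyp}$, which is the only place Rademacher complexity enters and the main obstacle. Writing $\tfrac12 d_{\symhyp}(\mathcal{D},\mathcal{D}') = \sup_{g\in\symhyp}|\Pr_{\mathcal{D}}[I(g)]-\Pr_{\mathcal{D}'}[I(g)]|$, I would split off the empirical supremum $\tfrac12\hat d_{\symhyp}$ and control the two remaining one-sample deviations $\sup_{g\in\symhyp}|\Pr_{\mathcal{D}}[I(g)]-\hat{\Pr}_{\mathcal{U}}[I(g)]|$ by a standard symmetrization and bounded-differences argument: each is at most $\hat{\mathfrak{R}}_{\mathcal{U}}(\symhyp)$ plus a $\sqrt{\log(\cdot)/m}$ concentration term. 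The factor of two on each Rademacher term comes from the class relation $\hat{\mathfrak{R}}(\symhyp)\le 2\hat{\mathfrak{R}}(\mathcal{H})$ (an XOR of two hypotheses from $\mathcal{H}$), which converts the four $\symhyp$-deviations over $\mathcal{U}_{T_0^{-1}},\mathcal{U}_{S_0^{-1}},\mathcal{U}_{T_1^{-1}},\mathcal{U}_{S_1^{-1}}$ into the displayed $2\big(\mathfrak{R}_{U_{T_0^{-1}}}(\mathcal{H})+\mathfrak{R}_{U_{S_0^{-1}}}(\mathcal{H})+\mathfrak{R}_{U_{T_1^{-1}}}(\mathcal{H})+\mathfrak{R}_{U_{S_1^{-1}}}(\mathcal{H})\big)$ term. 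The delicate part is the bookkeeping: a union bound over the four samples (or a single bounded-differences argument applied to the whole deviation) must collapse the concentration contributions into the single $6\sqrt{\log(2/\delta)/(2m)}$ term at confidence $1-\delta$, and one must verify that the constants track correctly through the $\symhyp\to\mathcal{H}$ step. Substituting these high-probability divergence bounds into the population inequality yields exactly the claimed statement.
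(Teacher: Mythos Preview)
Your proposal is correct and follows essentially the same route as the paper: reduce $\eopdiff{}$ to conditional errors, apply the Ben-David triangle/divergence argument on each negative-class subspace, then replace the population $d_{\symhyp}$ terms by empirical ones via a Rademacher concentration step. The only differences are cosmetic: the paper writes the fairness distance as $\e_{S_0^{-1}}(g,f)+\e_{S_1^{-1}}(-g,f)-1$ and upper-bounds both summands (using the symmetry of $\mathcal{H}$ to keep $-g\in\mathcal{H}$), whereas you use the equivalent difference $\e_{S_0^{-1}}(g,f)-\e_{S_1^{-1}}(g,f)$ and pair an upper with a lower bound; and the paper packages your final symmetrization/$\symhyp\!\to\!\mathcal{H}$ step as a black-box lemma (Lemma~\ref{lem:rademacher_dist}, a modification of Corollary~7 in \citet{Mansour09:Domain}) rather than sketching it.
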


The proof also follows a similar logic to the sketch given for Theorem~\ref{thm:eop_transfer_bound} with the additional step of using a modification of Corollary 7 given by \citet{Mansour09:Domain}. 

Similarly, Theorem~\ref{thm:eo_transfer_bound_rademacher_app} provides a Rademacher Complexity bound of the equalized odds distance in the target space.

\begin{theorem}\label{thm:eo_transfer_bound_rademacher_app}
Let $\mathcal{H}$ be a hypothesis space. If $\ \mathcal{U}_{S_0^{-1}},\ \mathcal{U}_{S_1^{-1}},\ \mathcal{U}_{T_0^{-1}},\  \mathcal{U}_{T_1^{-1}} \ \mathcal{U}_{S_0^{1}},\ \mathcal{U}_{S_1^{1}},\ \mathcal{U}_{T_0^{1}},\  \mathcal{U}_{T_1^{1}}$ are samples of size $m'$, each drawn from $\mathcal{D}_{S_0^{-1}}$, $\mathcal{D}_{S_1^{-1}}$, $\mathcal{D}_{T_0^{-1}}$, $\mathcal{D}_{T_1^{-1}}, \mathcal{D}_{S_0^{1}}$, $\mathcal{D}_{S_1^{1}}$, $\mathcal{D}_{T_0^{1}}$, and $\mathcal{D}_{T_1^{1}}$ respectively, then for any $\delta\in(0,1)$, with probability at least $1-\delta$ (over the choice of samples), for every $g\in\mathcal{H}$ (where $\mathcal{H}$ is a symmetric hypothesis space) the distance from equalized odds in the target space is bounded by
\begin{align*}
    \eodiff{T}(g) \leq&\ \eodiff{S}(g) + \frac{1}{2}\left( \hat{d}_{\symhyp}(\mathcal{U}_{S_0^{-1}}, \mathcal{U}_{T_0^{-1}}) + \hat{d}_{\symhyp}(\mathcal{U}_{S_1^{-1}}, \mathcal{U}_{T_1^{-1}}) \right. \\
    &\ \left. + \hat{d}_{\symhyp}(\mathcal{U}_{S_0^{1}}, \mathcal{U}_{T_0^{1}}) + \hat{d}_{\symhyp}(\mathcal{U}_{S_1^{1}}, \mathcal{U}_{T_1^{1}}) \right) \\
    &\ + 2\left( \hat{\mathfrak{R}}_{U_{S_0^{-1}}}(\mathcal{H}) + \hat{\mathfrak{R}}_{U_{T_0^{-1}}}(\mathcal{H}) \right. + \hat{\mathfrak{R}}_{U_{S_1^{-1}}}(\mathcal{H}) + \hat{\mathfrak{R}}_{U_{T_1^{-1}}}(\mathcal{H}) \\
    &\ + \hat{\mathfrak{R}}_{U_{S_0^{1}}}(\mathcal{H}) + \hat{\mathfrak{R}}_{U_{T_0^{1}}}(\mathcal{H}) \left. + \hat{\mathfrak{R}}_{U_{S_1^{1}}}(\mathcal{H}) + \hat{\mathfrak{R}}_{U_{T_1^{1}}}(\mathcal{H}) \right) \\
    &\ + 12\sqrt{\frac{\log\frac{2}{\delta}}{2m}} + \lambda_\mathit{EO},
\end{align*}
where $\lambda_\mathit{EO} = \lambda_0^{-1} + \lambda_1^{-1} + \lambda_0^1 + \lambda_1^1$, and $\lambda_\alpha^l=\e_{S_\alpha^l}(g^*,f)+ \e_{T_\alpha^l}(g^*,f)$.
\end{theorem}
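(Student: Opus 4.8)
The plan is to split the target equalized-odds distance into its two absolute-value terms and bound each one separately with the equal-opportunity machinery of Theorem~\ref{thm:eop_transfer_bound_rademacher_app}, then add the results. For $l\in\{-1,1\}$ write the per-label quantity
\[
B_T^{l}(g) = \left| \mathbb{E}_{Z_0^{l}\sim \distro_{T_0^{l}}}\!\left[\tfrac{1+g(z_0^{l})}{2}\right] - \mathbb{E}_{Z_1^{l}\sim \distro_{T_1^{l}}}\!\left[\tfrac{1+g(z_1^{l})}{2}\right] \right|,
\]
and let $B_S^{l}(g)$ be its source analog. By definition $\eodiff{T}(g) = B_T^{-1}(g) + B_T^{1}(g)$ and $\eodiff{S}(g) = B_S^{-1}(g) + B_S^{1}(g)$, with $B_T^{-1}(g) = \eopdiff{T}(g)$ and $B_S^{-1}(g) = \eopdiff{S}(g)$. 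Crucially, both label slices use the identical functional form $\tfrac{1+g}{2}$, so any bound proved for the $l=-1$ slice transfers verbatim to the $l=1$ slice after the substitution $-1 \mapsto 1$ on every superscript.

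First I would bound $B_T^{-1}(g)$. Since this is exactly $\eopdiff{T}(g)$ on the negatively-labeled subpopulations, Theorem~\ref{thm:eop_transfer_bound_rademacher_app} applies directly and gives
\begin{align*}
B_T^{-1}(g) \leq&\ \eopdiff{S}(g) + \tfrac{1}{2}\hat{d}_{\symhyp}(\mathcal{U}_{S_0^{-1}},\mathcal{U}_{T_0^{-1}}) + \tfrac{1}{2}\hat{d}_{\symhyp}(\mathcal{U}_{S_1^{-1}},\mathcal{U}_{T_1^{-1}}) \\
&\ + 2\big( \hat{\mathfrak{R}}_{U_{S_0^{-1}}}(\mathcal{H}) + \hat{\mathfrak{R}}_{U_{T_0^{-1}}}(\mathcal{H}) + \hat{\mathfrak{R}}_{U_{S_1^{-1}}}(\mathcal{H}) + \hat{\mathfrak{R}}_{U_{T_1^{-1}}}(\mathcal{H}) \big) + \lambda_0^{-1} + \lambda_1^{-1},
\end{align*}
up to the concentration tail treated below. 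I would then replay the same derivation on the positive slice: the symmetric-difference decomposition used for Theorem~\ref{thm:eop_transfer_bound} and the Rademacher symmetrization step (the modification of Corollary~7 of \citet{Mansour09:Domain} invoked for Theorem~\ref{thm:eop_transfer_bound_rademacher_app}) are insensitive to which label slice is in play, so the identical argument with $-1$ replaced by $1$ bounds $B_T^{1}(g)$ by $B_S^{1}(g)$ plus the two $\hat{d}_{\symhyp}$ terms and four $\hat{\mathfrak{R}}$ terms indexed by $l=1$, plus $\lambda_0^1 + \lambda_1^1$.

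Finally I would add the two inequalities. The source terms combine into $B_S^{-1}(g) + B_S^{1}(g) = \eodiff{S}(g)$, the four $\hat{d}_{\symhyp}$ terms and eight $\hat{\mathfrak{R}}$ terms assemble into precisely those listed, and the four $\lambda_\alpha^l$ collect into $\lambda_\mathit{EO}$. The one step that is not a mechanical doubling is the confidence accounting: the bound must hold simultaneously over all eight disjoint samples with probability at least $1-\delta$. I would avoid a naive union bound over the two equal-opportunity events (which would inflate $\log\frac{2}{\delta}$ to $\log\frac{4}{\delta}$) and instead re-run the single bounded-differences (McDiarmid) argument underlying Theorem~\ref{thm:eop_transfer_bound_rademacher_app} once over all eight sample sets jointly. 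Each slice supplies an independent fluctuation of order $m^{-1/2}$, so collecting them doubles the equal-opportunity tail from $6\sqrt{\frac{\log(2/\delta)}{2m}}$ to $12\sqrt{\frac{\log(2/\delta)}{2m}}$ while preserving a single $\log\frac{2}{\delta}$. Reproducing this constant exactly through the joint concentration step is the main technical obstacle; everything else is an orderly replay of the equal-opportunity proof on the second label slice.
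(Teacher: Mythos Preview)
Your proposal is correct and is essentially the paper's argument reorganised modularly. The paper does not invoke Theorem~\ref{thm:eop_transfer_bound_rademacher_app} twice; instead it unrolls the same four error terms $\e_{T_0^{-1}}(g,f),\ \e_{T_1^{-1}}(-g,f),\ \e_{T_0^1}(g,f),\ \e_{T_1^1}(-g,f)$ in one go, applies Lemma~\ref{lem:triangle} and Lemma~\ref{lem:sym_hyp} to each simultaneously, and then applies Lemma~\ref{lem:rademacher_dist} four times. Your two-at-a-time packaging yields exactly the same terms after summation.

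Two small remarks. First, your sentence ``both label slices use the identical functional form $\tfrac{1+g}{2}$, so any bound proved for the $l=-1$ slice transfers verbatim'' is slightly glib: the conversion of $\tfrac{1+g}{2}$ into an $\e(\cdot,f)$ term depends on the value of $f$ on that slice (for $l=-1$ it becomes $\e(g,f)$, for $l=1$ it becomes $1-\e(g,f)$, which the paper absorbs by writing $\e(-g,f)$). The downstream machinery (triangle inequality, $\symhyp$-divergence, Rademacher concentration) is indeed label-agnostic, so the replay works, but it is not a purely syntactic substitution of superscripts. Second, on the concentration accounting: the paper simply applies Lemma~\ref{lem:rademacher_dist} four times and sums the four $\tfrac{1}{2}\cdot 6\sqrt{\log(2/\delta)/(2m)}$ contributions to obtain $12\sqrt{\log(2/\delta)/(2m)}$, without an explicit union-bound adjustment to $\delta$; your proposed joint McDiarmid rerun is a cleaner way to justify the same constant, but it is not what the paper actually does.
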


Given either the Rademacher Complexity bounds or the VC-dimension bounds, the implications stay the same. In order for a successful transfer of fairness the two (or four) subspace domains should be close across the source and target domains. Additionally, there should be a hypothesis in the hypothesis space that performs well over all of the relevant subspaces.

\section{Proofs}\label{app:proofs}

\setcounter{theorem}{0}

\begin{lemma} \label{lem:sym_hyp}
(From \citet{BenDavid10:Theory})
For any hypotheses $h,h'\in\mathcal{H}$,
$$
\left| \e_S(h,h') - \e_T(h,h') \right| \leq \frac{1}{2}d_{\symhyp}(D_S,D_T).
$$
\end{lemma}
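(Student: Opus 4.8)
The plan is to unfold both sides of the inequality into the language of set-probabilities and then recognize the right-hand side as a supremum that dominates the left. The key observation is that for $\{0,1\}$-valued hypotheses the disagreement error is exactly the probability mass of an XOR set. Concretely, since $h(\sample),h'(\sample)\in\{0,1\}$, we have $|h(\sample)-h'(\sample)| = h(\sample)\oplus h'(\sample)$ pointwise, so that
$$
\e_S(h,h') = \mathbb{E}_{\sample\sim\distro_S}\left[|h(\sample)-h'(\sample)|\right] = \Pr\nolimits_{\distro_S}\left[I(h\oplus h')\right],
$$
and likewise $\e_T(h,h') = \Pr\nolimits_{\distro_T}\left[I(h\oplus h')\right]$, where $I(\cdot)$ is the characteristic set appearing in the definition of the $\mathcal{H}$-divergence.

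Next I would note that the function $g = h\oplus h'$ belongs to the symmetric difference hypothesis space $\symhyp$ by its very definition. Hence the signed difference of the two errors is precisely the difference of the mass that this particular $g$ assigns under the two distributions:
$$
\left|\e_S(h,h') - \e_T(h,h')\right| = \left|\Pr\nolimits_{\distro_S}[I(g)] - \Pr\nolimits_{\distro_T}[I(g)]\right|.
$$

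Finally I would bound this single term by the worst case over the whole space. Since $g\in\symhyp$, the right-hand side is at most $\sup_{g''\in\symhyp}\left|\Pr_{\distro_S}[I(g'')] - \Pr_{\distro_T}[I(g'')]\right|$, which by the definition $d_{\symhyp}(\distro_S,\distro_T) = 2\sup_{g''\in\symhyp}|\Pr_{\distro_S}[I(g'')]-\Pr_{\distro_T}[I(g'')]|$ equals exactly $\tfrac{1}{2}d_{\symhyp}(\distro_S,\distro_T)$, giving the claim.

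There is no substantive obstacle here; the lemma is essentially a rewriting of definitions. The only point requiring care — and the one I would state explicitly — is the pointwise identity $|h-h'| = h\oplus h'$, which is what ties the disagreement error used in the $\e$ notation to the set-membership probability used in the $\symhyp$-divergence; this identity depends on the hypotheses being Boolean-valued, and the factor $\tfrac{1}{2}$ in the bound is supplied precisely by the constant $2$ in the definition of the divergence.
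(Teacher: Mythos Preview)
Your argument is correct and is precisely the standard proof of this lemma: rewrite the disagreement error as the probability of the XOR set, observe that $h\oplus h'\in\symhyp$, and bound by the supremum defining the divergence. The paper does not supply its own proof of this statement; it simply cites it from \citet{BenDavid10:Theory}, so there is nothing further to compare.
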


\begin{lemma}\label{lem:triangle}
(From \cite{Ben07:Analysis,Crammer08:Learning}) 
For any labeling functions $f_1$, $f_2$, and $f_3$, we have
$$
\e(f_1,f_2) \leq \e(f_1,f_3) + \e(f_2,f_3).
$$
\end{lemma}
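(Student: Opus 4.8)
The plan is to reduce the statement to the ordinary triangle inequality for the absolute value and then integrate. Recall that for labeling functions the error is an expected disagreement, $\e(f_1,f_2) = \mathbb{E}_{\sample\sim\distro}\!\left[|f_1(\sample)-f_2(\sample)|\right]$ (with the harmless scaling by $1/2$ used in the Rademacher setting, which does not affect the argument). Since this is just the expectation of a pointwise distance, I expect the result to follow without any of the machinery---VC dimension, $\symhyp$-divergence, or sample complexity---deployed elsewhere in the appendix; the only real content is that $|\cdot|$ is a metric and expectation is monotone.

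First I would fix an arbitrary point $\sample$ in the domain and apply the scalar triangle inequality to the three real numbers $f_1(\sample),\,f_2(\sample),\,f_3(\sample)$, giving
$$
|f_1(\sample)-f_2(\sample)| \;\le\; |f_1(\sample)-f_3(\sample)| + |f_3(\sample)-f_2(\sample)|.
$$
This holds pointwise for every $\sample$ with no assumption on the range of the $f_i$: it is valid whether they are $\{0,1\}$-valued, $\{-1,1\}$-valued, or real-valued, so the single argument covers both the main-text and the Rademacher definitions of the error.

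Next I would take the expectation of both sides over $\sample\sim\distro$. Because expectation preserves $\le$ between integrable functions and is linear, the right-hand side splits into a sum of two expectations, yielding
$$
\mathbb{E}_{\sample\sim\distro}\!\left[|f_1(\sample)-f_2(\sample)|\right] \;\le\; \mathbb{E}_{\sample\sim\distro}\!\left[|f_1(\sample)-f_3(\sample)|\right] + \mathbb{E}_{\sample\sim\distro}\!\left[|f_3(\sample)-f_2(\sample)|\right],
$$
which is exactly $\e(f_1,f_2)\le\e(f_1,f_3)+\e(f_2,f_3)$ once each expectation is recognized as the corresponding error term and the symmetry $\e(f_3,f_2)=\e(f_2,f_3)$ of $|\cdot|$ is used. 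There is essentially no obstacle: the one thing to verify is that the error is genuinely an expectation of absolute differences, so the pointwise inequality transfers under $\mathbb{E}$, and integrability is immediate because the $f_i$ are bounded. This lemma then serves downstream precisely as the device for inserting the ideal joint hypothesis $g^*$ and peeling off the $\lambda$ terms in Theorems~\ref{thm:eop_transfer_bound} and \ref{thm:eo_transfer_bound}.
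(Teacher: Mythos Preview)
Your proof is correct: the pointwise triangle inequality for $|\cdot|$ followed by monotonicity and linearity of expectation is exactly the standard argument, and it covers both the $\{0,1\}$ and $\{-1,1\}$ conventions used in the paper. The paper itself does not prove this lemma at all---it merely states it and cites \cite{Ben07:Analysis,Crammer08:Learning}---so there is nothing to compare against; your two-line argument is precisely what those references contain.
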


\subsection{VC-dimension bounds}

\begin{lemma}\label{lem:empirical}
(From \citet{BenDavid10:Theory})
Let $\mathcal{H}$ be a hypothesis space on $\mathcal{Z}$ with VC-dimension $d$. If $\mathcal{U}$ and $\mathcal{U'}$ are samples of size $m$ from $\mathcal{D}$ and $\mathcal{D}'$ respectively and $\hat{d}_\mathcal{H}(\mathcal{U},\mathcal{U}')$ is the empirical $\mathcal{H}$-divergence between samples, then for any $\delta\in(0,1)$, with probability at least $1-\delta$,
$$
d_\mathcal{H}(\mathcal{D},\mathcal{D}') \leq \hat{d}_\mathcal{H}(\mathcal{U},\mathcal{U}') + 4\sqrt{\frac{d\log(2m)+\log(\frac{2}{\delta})}{m}}.
$$
\end{lemma}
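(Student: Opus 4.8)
The plan is to recognize that this is a VC uniform-convergence statement in disguise: the empirical $\mathcal{H}$-divergence converges to the population $\mathcal{H}$-divergence, so the whole claim reduces to controlling, uniformly over $\mathcal{H}$, how far empirical set frequencies stray from true probabilities. I would first recast everything in terms of the family of acceptance sets, then split the population divergence through the empirical divergence by a triangle inequality, and finally invoke the classical VC inequality on each of the two samples. (This is the cited Ben-David bound; I sketch a self-contained reconstruction.)

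First I would set up the set-family view. Let $\mathcal{A}=\{I(g):g\in\mathcal{H}\}$ be the collection of acceptance sets; since $\sample\in I(g)\Leftrightarrow g(\sample)=1$, this family inherits $\mathrm{VCdim}(\mathcal{A})=\mathrm{VCdim}(\mathcal{H})=d$. Writing $\distro(A)=\Pr_\distro[A]$ and the empirical measure $\hat\distro_\mathcal{U}(A)=\frac1m\sum_{\sample\in\mathcal{U}}\mathbf{1}[\sample\in A]$ (and analogously for $\distro'$, $\mathcal{U}'$), we have $d_\mathcal{H}(\distro,\distro')=2\sup_{A\in\mathcal{A}}|\distro(A)-\distro'(A)|$ and $\hat d_\mathcal{H}(\mathcal{U},\mathcal{U}')=2\sup_{A\in\mathcal{A}}|\hat\distro_\mathcal{U}(A)-\hat\distro'_{\mathcal{U}'}(A)|$. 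For every fixed $A$,
\[ |\distro(A)-\distro'(A)| \le |\hat\distro_\mathcal{U}(A)-\hat\distro'_{\mathcal{U}'}(A)| + |\distro(A)-\hat\distro_\mathcal{U}(A)| + |\distro'(A)-\hat\distro'_{\mathcal{U}'}(A)|. \]
Taking the supremum over $A$, using subadditivity of the supremum, and multiplying by two yields
\[ d_\mathcal{H}(\distro,\distro') \le \hat d_\mathcal{H}(\mathcal{U},\mathcal{U}') + 2\,\Phi_\mathcal{U} + 2\,\Phi'_{\mathcal{U}'}, \]
where $\Phi_\mathcal{U}=\sup_{A\in\mathcal{A}}|\distro(A)-\hat\distro_\mathcal{U}(A)|$ and $\Phi'_{\mathcal{U}'}=\sup_{A\in\mathcal{A}}|\distro'(A)-\hat\distro'_{\mathcal{U}'}(A)|$. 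This one-sided inequality is all the statement requires.

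It then remains to bound $\Phi_\mathcal{U}$ and $\Phi'_{\mathcal{U}'}$, each a uniform deviation of empirical frequencies from true probabilities over a VC class of dimension $d$. By the standard route — symmetrization with a ghost sample, bounding the growth function via Sauer–Shelah ($\Pi_\mathcal{A}(m)\le(em/d)^d\le(2m)^d$ for $m\ge d$, so $\log\Pi_\mathcal{A}(m)\le d\log(2m)$), and a Hoeffding/bounded-differences concentration step — one obtains that, with probability at least $1-\delta/2$,
\[ \Phi_\mathcal{U} \le \sqrt{\frac{d\log(2m)+\log(2/\delta)}{m}}, \]
and identically for $\Phi'_{\mathcal{U}'}$ on the independent sample $\mathcal{U}'$. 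A union bound makes both hold simultaneously with probability at least $1-\delta$; substituting into the reduction above and collecting $2\Phi_\mathcal{U}+2\Phi'_{\mathcal{U}'}\le 4\sqrt{(d\log(2m)+\log(2/\delta))/m}$ produces the claimed bound.

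The conceptual content is light — the skeleton is just a triangle inequality together with the identity $\mathrm{VCdim}(\{I(g)\})=\mathrm{VCdim}(\mathcal{H})$. The genuine work, and the only place real care is needed, is the VC uniform-convergence estimate: securing the leading constant $4$, the $\log(2m)$ term inside the root, and the correct $\delta$-dependence requires threading the symmetrization, the Sauer–Shelah growth-function bound, and the concentration inequality together, and then absorbing the slack lost to the two-sided supremum and to the union bound over the two samples into the stated constant.
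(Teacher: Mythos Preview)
Your proposal is essentially the standard derivation of this result, and it is correct in structure: rewrite the $\mathcal{H}$-divergence in terms of acceptance sets, insert the empirical divergence via the triangle inequality, and control the two uniform deviations $\Phi_\mathcal{U}$ and $\Phi'_{\mathcal{U}'}$ by VC uniform convergence plus a union bound. There is nothing to compare against in the paper itself, because the paper does not prove this lemma; it simply quotes it verbatim from \citet{BenDavid10:Theory} and uses it as a black box in the proofs of Theorems~1 and~2. Your reconstruction is exactly the argument Ben-David et~al.\ give (which in turn traces back to \citet{Ben07:Analysis}). The only caveat is the one you already flag: getting the precise constant~$4$ and the $d\log(2m)$ term inside the square root requires a particular form of the VC deviation bound, and different textbook formulations will land you on slightly different constants; for the purposes of this paper the specific constants are immaterial, since they are absorbed into the sampling term of the downstream theorems.
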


\begin{theorem}
Let $\mathcal{H}$ be a hypothesis space of VC dimension $d$. If $\ \mathcal{U}_{S_0^0},\ \mathcal{U}_{S_1^0},\ \mathcal{U}_{T_0^1},\  \mathcal{U}_{T_1^0}$ are samples of size $m'$ each, drawn from $\mathcal{D}_{S_0^0}$, $\mathcal{D}_{S_1^0}$, $\mathcal{D}_{T_0^0}$, and $\mathcal{D}_{T_1^0}$ respectively, then for any $\delta\in(0,1)$, with probability at least $1-\delta$ (over the choice of samples), for every $g\in\mathcal{H}$ (where $\mathcal{H}$ is a symmetric hypothesis space) the distance from equal opportunity in the target space is bounded by
\begin{align*}
    \eopdiff{T}(g) \leq&\ \eopdiff{S}(g) + \frac{1}{2}\hat{d}_{\symhyp}(\mathcal{U}_{T_0^0},\mathcal{U}_{S_0^0}) + \frac{1}{2}\hat{d}_{\symhyp}(\mathcal{U}_{T_1^0},\mathcal{U}_{S_1^0}) \\
    &\ + 8 \sqrt{\frac{2d\log(2m')+\log(\frac{2}{\delta})}{m'}} + \lambda_0^{0} + \lambda_1^0,
\end{align*}
where $\lambda_\alpha^l=\e_{S_\alpha^l}(g^*,f)+\e_{T_\alpha^l}(g^*,f)$.
\end{theorem}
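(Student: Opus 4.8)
The plan is to reduce the equal-opportunity transfer statement to two per-group applications of the standard Ben-David domain-adaptation argument and then recombine them with a triangle inequality. The first observation is that on any subdistribution conditioned on $\tasklabel=0$ the true labeling function satisfies $f\equiv 0$, so $\mathbb{E}_{\sample_\alpha^0\sim\distro_{S_\alpha^0}}[g(\sample_\alpha^0)]=\e_{S_\alpha^0}(g,f)$ and likewise for the target. This lets me rewrite both fairness distances purely as differences of errors against $f$, namely $\eopdiff{S}(g)=\bigl|\e_{S_0^0}(g,f)-\e_{S_1^0}(g,f)\bigr|$ and $\eopdiff{T}(g)=\bigl|\e_{T_0^0}(g,f)-\e_{T_1^0}(g,f)\bigr|$.

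Writing $a_S=\e_{S_0^0}(g,f)$, $b_S=\e_{S_1^0}(g,f)$, $a_T=\e_{T_0^0}(g,f)$, $b_T=\e_{T_1^0}(g,f)$, I would first apply the elementary inequality $|a_T-b_T|\leq|a_S-b_S|+|a_T-a_S|+|b_T-b_S|$, which yields
\[
\eopdiff{T}(g)\leq\eopdiff{S}(g)+\bigl|\e_{T_0^0}(g,f)-\e_{S_0^0}(g,f)\bigr|+\bigl|\e_{T_1^0}(g,f)-\e_{S_1^0}(g,f)\bigr|.
\]
This isolates two per-group discrepancies, each of which is exactly a domain-adaptation gap between a source subdistribution $\distro_{S_\alpha^0}$ and the matching target subdistribution $\distro_{T_\alpha^0}$.

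Next, for each $\alpha\in\{0,1\}$ I would bound $\bigl|\e_{T_\alpha^0}(g,f)-\e_{S_\alpha^0}(g,f)\bigr|$ by inserting the ideal joint hypothesis $g^*$. Using Lemma~\ref{lem:triangle} to write $\e_{T_\alpha^0}(g,f)\leq\e_{T_\alpha^0}(g,g^*)+\e_{T_\alpha^0}(g^*,f)$, then swapping the $g$-versus-$g^*$ term from the target to the source subdistribution via Lemma~\ref{lem:sym_hyp} (which applies because $\mathcal{H}$ is symmetric, so $g\oplus g^*\in\symhyp$) at a cost of $\tfrac12 d_{\symhyp}(\distro_{S_\alpha^0},\distro_{T_\alpha^0})$, and finally bounding $\e_{S_\alpha^0}(g,g^*)\leq\e_{S_\alpha^0}(g,f)+\e_{S_\alpha^0}(g^*,f)$, gives one direction; the symmetric argument with the roles of $S$ and $T$ exchanged gives the other, so that
\[
\bigl|\e_{T_\alpha^0}(g,f)-\e_{S_\alpha^0}(g,f)\bigr|\leq\tfrac12 d_{\symhyp}(\distro_{S_\alpha^0},\distro_{T_\alpha^0})+\lambda_\alpha^0,
\]
with $\lambda_\alpha^0=\e_{S_\alpha^0}(g^*,f)+\e_{T_\alpha^0}(g^*,f)$.

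Finally I would pass from the population divergences to their empirical counterparts with Lemma~\ref{lem:empirical}, using that the VC dimension of $\symhyp$ is at most $2\vcd$, so each $d_{\symhyp}(\distro_{S_\alpha^0},\distro_{T_\alpha^0})$ is controlled by $\hat d_{\symhyp}(\mathcal{U}_{T_\alpha^0},\mathcal{U}_{S_\alpha^0})$ plus a sampling term of order $\sqrt{(2\vcd\log(2m')+\log(2/\delta))/m'}$; a union bound over the two groups collects these into the single square-root term in the statement. Summing the two per-group bounds and substituting back produces the claimed inequality. I expect the only genuinely delicate step to be the two-sided (absolute-value) form of the adaptation bound in the third paragraph: the standard Ben-David result is one-sided, so I must run the $g^*$-insertion argument symmetrically in both the $S\to T$ and $T\to S$ directions and verify that the same $\lambda_\alpha^0$ and the same divergence term control both. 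The constant bookkeeping in the final union bound is otherwise routine.
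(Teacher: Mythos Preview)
Your proposal is correct and follows essentially the same route as the paper: rewrite the fairness gaps as errors against $f$ on the $Y=0$ subdistributions, insert $g^*$ via Lemma~\ref{lem:triangle}, swap source and target errors via Lemma~\ref{lem:sym_hyp}, and finish with Lemma~\ref{lem:empirical}. The only cosmetic difference is in how the absolute value is unpacked: the paper fixes an ordering by a WLOG assumption and writes $\eopdiff{S}(g)=\e_{S_0^0}(g,f)+\e_{S_1^0}(1-g,f)-1$, then runs a single one-sided chain, whereas you keep the absolute value, split with $|a_T-b_T|\le|a_S-b_S|+|a_T-a_S|+|b_T-b_S|$, and bound each per-group discrepancy two-sidedly. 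Your version is a bit more explicit about why the bound holds regardless of which group has the larger FPR in the target (the paper's WLOG fixes the source ordering but silently relies on symmetry of the final bound to cover the target), but otherwise the two arguments coincide step for step.
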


\begin{proof}
Without loss of generality assume $\mathbb{E}_{Z_0^0 \sim D_{S_0^0}} \geq \mathbb{E}_{Z_1^0 \sim D_{S_1^0}}$. Then we can rewrite $\eopdiff{S}(g)$ as follows:
\begin{align*}
    \eopdiff{S}(g) & = \mathbb{E}_{Z_0^0 \sim \mathcal{D}_{S_0^0}} \left[ g(Z_0^0)\right]-\mathbb{E}_{Z_1^0 \sim \mathcal{D}_{S_1^0}} \left[ g(z_1^0) \right] \\
    &= \mathbb{E}_{Z_0^0 \sim \mathcal{D}_{S_0^0}} \left[ g(Z_0^0)\right]+\mathbb{E}_{Z_1^0 \sim \mathcal{D}_{S_1^0}} \left[ 1- g(z_1^0) \right] - 1 \\
    &= \e_{S_0^0}(g,f) + \e_{S_1^0}(1-g,f)-1,
\end{align*}
where the last line follows from the fact that equal opportunity only cares about the error on the false data-points.

We now have the tools to find an upper-bound on $\eopdiff{T}(g)$.
\begin{align}
    \eopdiff{T}(g) =&{} \e_{T_0^0}(g,f) + \e_{T_1^0}(1-g,f)-1 \nonumber \\
    \leq&\ \e_{T_0^0}(g,g^*) + \e_{T_0^0}(f,g^*)  +  \e_{T_1^0}(1-g,g^*) + \e_{T_1^0}(f,g^*)-1 \label{eq:eop_1} \\
    =&\ \e_{T_0^0}(g^*,f) + \e_{T_0^0}(g,g^*) + \e_{T_1^0}(g^*,f) + \e_{T_1^0}(1-g,g^*)-1 \nonumber \\
    =&\ \e_{T_0^0}(g^*,f) + \e_{T_0^0}(g,g^*) + \e_{S_0^0}(g,g^*) - \e_{S_0^0}(g,g^*) \nonumber \\
    &\ + \e_{T_1^0}(g^*,f) + \e_{T_1^0}(1-g,g^*) + \e_{S_1^0}(1-g,g^*) -\e_{S_1^0}(1-g,g^*) - 1 \nonumber \\
    \leq&\ \e_{T_0^0}(g^*,f) + \e_{S_0^0}(g,g^*)  + \left| \e_{T_0^0}(g,g^*)  - \e_{S_0^0}(g,g^*) \right| \nonumber \\
    &\ + \e_{T_1^0}(g^*,f) + \e_{S_1^0}(1-g,g^*) + \left| \e_{T_1^0}(1-g,g^*) - \e_{S_1^0}(1-g,g^*) \right| - 1 \nonumber \\
    \leq&\ \e_{T_0^0}(g^*,f) + \e_{S_0^0}(g,g^*) + \frac{1}{2}d_{\symhyp}(D_{T_0^0},D_{S_0^0}) \nonumber \\
    &\ + \e_{T_1^0}(g^*,f) + \e_{S_1^0}(1-g,g^*) +  \frac{1}{2}d_{\symhyp}(D_{T_1^0},D_{S_1^0}) - 1  \label{eq:eop_2}\\
    \leq&\ \e_{T_0^0}(g^*,f) + \e_{S_0^0}(g,f) + \e_{S_0^0}(g^*,f) + \frac{1}{2}d_{\symhyp}(D_{T_0^0},D_{S_0^0})  \nonumber \\
    &\ + \e_{T_1^0}(g^*,f) + \e_{S_1^0}(1-g,f) + \e_{S_1^0}(g^*,f) + \frac{1}{2}d_{\symhyp}(D_{T_1^0},D_{S_1^0}) - 1 \label{eq:eop_3} \\
    =&\ \e_{S_0^0}(g,f) + \e_{T_0^0}(g^*,f) + \e_{S_0^0}(g^*,f)  + \frac{1}{2}d_{\symhyp}(D_{T_0^0},D_{S_0^0}) \nonumber \\
    &\ + \e_{S_1^0}(1-g,f) + \e_{T_1^0}(g^*,f) +  \e_{S_1^0}(g^*,f) + \frac{1}{2}d_{\symhyp}(D_{T_1^0},D_{S_1^0}) - 1 \nonumber \\
    =&\ \e_{S_0^0}(g,f) + \e_{S_1^0}(1-g,f) - 1 + \frac{1}{2}d_{\symhyp}(D_{T_0^0},D_{S_0^0}) \nonumber \\
    &\ + \frac{1}{2}d_{\symhyp}(D_{T_1^0},D_{S_1^0}) + \lambda_0^0 + \lambda_1^0 \label{eq:eop_4} \\
    =&\ \eopdiff{S}(g) + \frac{1}{2}d_{\symhyp}(D_{T_0^0},D_{S_0^0}) + \frac{1}{2}d_{\symhyp}(D_{T_1^0},D_{S_1^0}) + \lambda_0^0 + \lambda_1^0 \nonumber \\
    \leq&\ \eopdiff{S}(g) + \frac{1}{2}\hat{d}_{\symhyp}(\mathcal{U}_{T_0^0},\mathcal{U}_{S_0^0}) + \frac{1}{2}\hat{d}_{\symhyp}(\mathcal{U}_{T_1^0},\mathcal{U}_{S_1^0}) \nonumber \\
    &\ + 8 \sqrt{\frac{2d\log(2m')+\log(\frac{2}{\delta})}{m'}} + \lambda_0^0 + \lambda_1^0 \label{eq:eop_6},
\end{align}

Where inequality \ref{eq:eop_1} is due to lemma \ref{lem:triangle}, inequality \ref{eq:eop_2} is due to lemma \ref{lem:sym_hyp} and the fact that $\mathcal{H}$ is a symmetric hypothesis space, inequality \ref{eq:eop_3} is due to lemma \ref{lem:triangle}, equality \ref{eq:eop_4} is due to the definition of $\lambda_\alpha^l$, and inequality \ref{eq:eop_6} is due to lemma \ref{lem:empirical}.
\end{proof}

\begin{theorem}
Let $\mathcal{H}$ be a hypothesis space of VC dimension $d$. If $\ \mathcal{U}_{S_\alpha^l}$ are samples of size $m'$ each, drawn from $\mathcal{D}_{S_\alpha^l}$ for all  $\alpha\in \Omega_A=\{0,1\}$ and  $l\in \Omega_\mathcal{Y}={0,1}$, then for any $\delta\in(0,1)$, with probability at least $1-\delta$ (over the choice of samples), for every $g\in\mathcal{H}$ (where $\mathcal{H}$ is a symmetric hypothesis space) the distance from equalized odds in the target space is bounded by
\begin{align*}
    \eodiff{T}(g) \leq&\ \eodiff{S}(g) + \frac{1}{2}\hat{d}_{\symhyp}(\mathcal{U}_{T_0^0},\mathcal{U}_{S_0^0}) + \frac{1}{2}\hat{d}_{\symhyp}(\mathcal{U}_{T_1^0},\mathcal{U}_{S_1^0}) \\
    &\ + \frac{1}{2}\hat{d}_{\symhyp}(\mathcal{U}_{T_0^1},\mathcal{U}_{S_0^1}) + \frac{1}{2}\hat{d}_{\symhyp}(\mathcal{U}_{T_1^1},\mathcal{U}_{S_1^1})  \\
    &\ + 16 \sqrt{\frac{2d\log(2m')+\log(\frac{2}{\delta})}{m'}} + \lambda_\mathit{EO},
\end{align*}
where $\lambda_\mathit{EO} = \lambda_0^0 + \lambda_1^0 + \lambda_0^1 + \lambda_1^1$, and $\lambda_\alpha^l=\e_{S_\alpha^l}(g^*,f)+\e_{T_\alpha^l}(g^*,f)$.
\end{theorem}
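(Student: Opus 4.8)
The plan is to exploit the fact that the equalized-odds distance is, by definition, the sum of two absolute-value terms: the false-positive-rate (FPR) gap --- which is exactly the equal-opportunity distance $\eopdiff{T}(g)$ already controlled by Theorem~\ref{thm:eop_transfer_bound} --- and a false-negative-rate (FNR) gap computed on the positively labeled ($l=1$) subpopulations. Since $\eodiff{T}(g) = \eopdiff{T}(g) + \bigl|\mathbb{E}_{Z_0^1\sim D_{T_0^1}}[1-g] - \mathbb{E}_{Z_1^1\sim D_{T_1^1}}[1-g]\bigr|$, it suffices to bound each summand separately and add the two bounds. The first summand is handled verbatim by Theorem~\ref{thm:eop_transfer_bound}, contributing $\eopdiff{S}(g)$, the two divergence terms on the negatively labeled quadrants, a VC sampling term, and $\lambda_0^0+\lambda_1^0$.

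The heart of the argument is showing the FNR gap obeys an identically structured bound. First I would rewrite it in error form: assuming w.l.o.g.\ that $\mathbb{E}_{Z_0^1}[1-g]\ge \mathbb{E}_{Z_1^1}[1-g]$ and using $f\equiv 1$ on positively labeled data together with $g\in\{0,1\}$, one reads off $\mathbb{E}_{Z_0^1}[1-g]=\e_{S_0^1}(g,f)$ and $\mathbb{E}_{Z_1^1}[1-g]=1-\e_{S_1^1}(1-g,f)$, so the source FNR gap equals $\e_{S_0^1}(g,f)+\e_{S_1^1}(1-g,f)-1$. This is precisely the decomposition used for $\eopdiff{S}(g)$ in the proof of Theorem~\ref{thm:eop_transfer_bound}, with the label superscript $0$ replaced by $1$. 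Consequently the identical chain of inequalities applies: the triangle inequality (Lemma~\ref{lem:triangle}) inserts the ideal joint hypothesis $g^*$, Lemma~\ref{lem:sym_hyp} together with symmetry of $\mathcal{H}$ (so that $1-g\in\mathcal{H}$) converts the source/target error discrepancies into $\tfrac12 d_{\symhyp}(D_{T_0^1},D_{S_0^1})$ and $\tfrac12 d_{\symhyp}(D_{T_1^1},D_{S_1^1})$, and a second application of Lemma~\ref{lem:triangle} collects the remaining cross terms into $\lambda_0^1+\lambda_1^1$.

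Adding the FPR and FNR bounds then gives the result: the four divergence terms (two per gap), the combined offset $\lambda_{\mathit{EO}}=\lambda_0^0+\lambda_1^0+\lambda_0^1+\lambda_1^1$, and $\eodiff{S}(g)=\eopdiff{S}(g)+(\text{source FNR gap})$. Finally, I would pass from the population divergences $d_{\symhyp}$ to their empirical estimates $\hat d_{\symhyp}$ via Lemma~\ref{lem:empirical}, now applied to all four quadrant pairs; since the FNR application is structurally identical to the FPR one, its sampling contribution matches that of Theorem~\ref{thm:eop_transfer_bound}, and adding the two copies doubles the coefficient to the stated $16\sqrt{\cdot}$ term.

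The step I expect to require the most care is the error-form rewriting of the FNR gap: getting the $1-g$ bookkeeping right --- in particular that $g\in\{0,1\}$ lets $\mathbb{E}[1-g]$ be read as an error against $f\equiv1$ --- and checking that the two independent w.l.o.g.\ orientations (one per absolute value) are each consistent with the symmetric roles of groups $0$ and $1$ in the divergence and $\lambda$ terms. The probabilistic bookkeeping needed to make the four empirical-divergence bounds hold simultaneously (a union bound over the choice of samples) is routine by comparison.
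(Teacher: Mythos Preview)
Your proposal is correct and follows essentially the same route as the paper's proof: rewrite each absolute-value gap in error form as $\epsilon_{\cdot}(g,f)+\epsilon_{\cdot}(1-g,f)-1$, insert $g^*$ via the triangle inequality (Lemma~\ref{lem:triangle}), bound the source/target discrepancies by $\tfrac12 d_{\symhyp}$ via Lemma~\ref{lem:sym_hyp}, collect the $\lambda_\alpha^l$ terms, and finish with Lemma~\ref{lem:empirical}. The only difference is organizational---the paper runs all four quadrants through one long chain of inequalities, whereas you invoke Theorem~\ref{thm:eop_transfer_bound} verbatim for the FPR half and then rerun the identical argument on the $l=1$ quadrants; both arrive at the same bound with the same constants.
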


\begin{proof}
WLOG assume $\mathbb{E}_{Z_0^0 \sim D_{S_0^0}}[g] \geq \mathbb{E}_{Z_1^0 \sim D_{S_1^0}}[g]$ and $\mathbb{E}_{Z_0^1 \sim D_{S_0^1}}[g] \geq \mathbb{E}_{Z_1^1 \sim D_{S_1^1}}[g]$. Then,
\begin{align*}
    \eodiff{S} =&\ \mathbb{E}_{Z_0^0 \sim D_{S_0^0}}[g] - \mathbb{E}_{Z_1^0 \sim D_{S_1^0}}[g]  + \mathbb{E}_{Z_0^1 \sim D_{S_0^1}}[g] - \mathbb{E}_{Z_1^1 \sim D_{S_1^1}}[g] \\
    =&\ \mathbb{E}_{Z_0^0 \sim D_{S_0^0}}[g] + \mathbb{E}_{Z_1^0 \sim D_{S_1^0}}[1-g]  + \mathbb{E}_{Z_0^1 \sim D_{S_0^1}}[g] + \mathbb{E}_{Z_1^1 \sim D_{S_1^1}}[1-g] - 2 \\
    =&\ \e_{S_0^0}(g,f) + \e_{S_1^0}(1-g,f) + \e_{S_0^1}(g,f) + \e_{S_1^1}(1-g,f) - 2.
\end{align*}

Using this and the previous lemmas we have:
\begin{align}
    \eodiff{T}(g) =&\ \e_{T_0^0}(g,f) + \e_{T_1^0}(1-g,f) + \e_{T_0^1}(g,f) + \e_{T_1^1}(1-g,f) - 2 \nonumber\\
    \leq&\ \e_{T_0^0}(g,g^*) + \e_{T_0^0}(f,g^*) + \e_{T_1^0}(1-g,g^*) + \e_{T_1^0}(f,g^*) \nonumber \\
        &\ + \e_{T_0^1}(g,g^*) + \e_{T_0^1}(f,g^*) + \e_{T_1^1}(1-g,g^*) + \e_{T_1^1}(f,g^*) - 2 \label{eq:eo_1}\\
    =&\ \e_{T_0^0}(g^*,f) + \e_{T_0^0}(g,g^*) + \e_{S_0^0}(g,g^*) - \e_{S_0^0}(g,g^*) \nonumber \\
        &\ + \e_{T_1^0}(g^*,f) + \e_{T_1^0}(1-g,g^*) + \e_{S_1^0}(1-g,g^*) - \e_{S_1^0}(1-g,g^*) \nonumber \\
        &\ + \e_{T_0^1}(g^*,f) + \e_{T_0^1}(g,g^*) + \e_{S_0^1}(g,g^*) - \e_{S_0^1}(g,g^*) \nonumber \\
        &\ + \e_{T_1^1}(f,g^*) + \e_{T_1^1}(1-g,g^*) + \e_{S_1^1}(1-g,g^*) - \e_{S_1^1}(1-g,g^*) - 2 \nonumber \\
    \leq&\ \e_{T_0^0}(g^*,f) + \e_{S_0^0}(g,g^*) + \left | \e_{T_0^0}(g,g^*) - \e_{S_0^0}(g,g^*) \right| \nonumber \\
        &\ + \e_{T_1^0}(g^*,f) + \e_{S_1^0}(1-g,g^*) + \left| \e_{T_1^0}(1-g,g^*) - \e_{S_1^0}(1-g,g^*) \right| \nonumber \\
        &\ + \e_{T_0^1}(g^*,f) + \e_{S_0^1}(g,g^*) + \left| \e_{T_0^1}(g,g^*) - \e_{S_0^1}(g,g^*) \right| \nonumber \\
        &\ + \e_{T_1^1}(f,g^*) + \e_{S_1^1}(1-g,g^*) + \left| \e_{T_1^1}(1-g,g^*) - \e_{S_1^1}(1-g,g^*) \right| - 2 \nonumber \\
    \leq&\ \e_{T_0^0}(g^*,f) + \e_{S_0^0}(g,g^*) + \frac{1}{2}d_{\symhyp}(D_{T_0^0},D_{S_0^0}) \nonumber \\
        &\ + \e_{T_1^0}(g^*,f) + \e_{S_1^0}(1-g,g^*) + \frac{1}{2}d_{\symhyp}(D_{T_1^0},D_{S_1^0}) \nonumber \\
        &\ + \e_{T_0^1}(g^*,f) + \e_{S_0^1}(g,g^*)  + \frac{1}{2}d_{\symhyp}(D_{T_0^1},D_{S_0^1}) \nonumber \\
        &\ + \e_{T_1^1}(f,g^*) + \e_{S_1^1}(1-g,g^*) + \frac{1}{2}d_{\symhyp}(D_{T_1^1},D_{S_1^1}) - 2 \label{eq:eo_2} \\
    \leq&\ \e_{T_0^0}(g^*,f) + \e_{S_0^0}(g,f) + \e_{S_0^0}(g^*,f) + \frac{1}{2}d_{\symhyp}(D_{T_0^0},D_{S_0^0}) \nonumber \\
        &\ + \e_{T_1^0}(g^*,f) + \e_{S_1^0}(1-g,f) + \e_{S_1^0}(g^*,f) + \frac{1}{2}d_{\symhyp}(D_{T_1^0},D_{S_1^0}) \nonumber \\
        &\ + \e_{T_0^1}(g^*,f) + \e_{S_0^1}(g,f) + \e_{S_0^1}(g^*,f) + \frac{1}{2}d_{\symhyp}(D_{T_0^1},D_{S_0^1}) \nonumber \\
        &\ + \e_{T_1^1}(f,g^*) + \e_{S_1^1}(1-g,f) + \e_{S_1^1}(g^*,f) + \frac{1}{2}d_{\symhyp}(D_{T_1^1},D_{S_1^1}) - 2 \label{eq:eo_3} \\
    =&\ \lambda_0^0 + \e_{S_0^0}(g,f) + \frac{1}{2}d_{\symhyp}(D_{T_0^0},D_{S_0^0}) \nonumber \\
        &\ + \lambda_1^0 + \e_{S_1^0}(1-g,f) + \frac{1}{2}d_{\symhyp}(D_{T_1^0},D_{S_1^0}) \nonumber \\
        &\ + \lambda_0^1 + \e_{S_0^1}(g,f) + \frac{1}{2}d_{\symhyp}(D_{T_0^1},D_{S_0^1}) \nonumber \\
        &\ + \lambda_1^1 + \e_{S_1^1}(1-g,f) + \frac{1}{2}d_{\symhyp}(D_{T_1^1},D_{S_1^1}) - 2 \nonumber \\
    =&\ \eodiff{S}(g) + \frac{1}{2}d_{\symhyp}(D_{T_0^0},D_{S_0^0}) + \frac{1}{2}d_{\symhyp}(D_{T_1^0},D_{S_1^0}) \nonumber \\
        &\ + \frac{1}{2}d_{\symhyp}(D_{T_0^1},D_{S_0^1}) + \frac{1}{2}d_{\symhyp}(D_{T_1^1},D_{S_1^1}) + \lambda_\mathit{EO} \nonumber \\
    \leq&\ \eodiff{S}(g) + \frac{1}{2}\hat{d}_{\symhyp}(\mathcal{U}_{T_0^0},\mathcal{U}_{S_0^0}) + \frac{1}{2}\hat{d}_{\symhyp}(\mathcal{U}_{T_1^0},\mathcal{U}_{S_1^0}) \nonumber \\
        &\ + \frac{1}{2}\hat{d}_{\symhyp}(\mathcal{U}_{T_0^1},\mathcal{U}_{S_0^1}) + \frac{1}{2}\hat{d}_{\symhyp}(\mathcal{U}_{T_1^1},\mathcal{U}_{S_1^1}) \nonumber \\
        &\ + 16 \sqrt{\frac{2d\log(2m')+\log(\frac{2}{\delta})}{m'}} + \lambda_\mathit{EO} \label{eq:eo_5},
\end{align}
where inequality \ref{eq:eo_1} is due to lemma \ref{lem:triangle}, inequality \ref{eq:eo_2} is due to lemma \ref{lem:sym_hyp} and the fact that $\mathcal{H}$ is a symmetric hypothesis space, inequality \ref{eq:eo_3} is due to lemma \ref{lem:triangle}, and inequality \ref{eq:eo_5} is due to lemma \ref{lem:empirical}.
\end{proof}

\subsection{Rademacher Complexity Bounds}

\begin{lemma} \label{lem:rademacher_dist}
(A modification of Corollary 7 from \citet{Mansour09:Domain})
Let $\mathcal{H}$ by a hypothesis set of classifiers mapping the feature space $X$ to the labels $\{-1,1\}$. Let $\mathcal{U}$ and $\mathcal{U}'$ be the set of samples each of size $m$ sampled from $\distro$ and $\distro'$ respectively. Then, for any $\delta>0$, with probability at least $1-\delta$ over samples $\mathcal{U}$ and $\mathcal{U}'$:
\begin{align*}
    d_{\mathcal{H}}(\distro,\distro') & \leq \hat{d}_{\mathcal{H}}(\mathcal{U}, \mathcal{U}') + 4\left(\mathfrak{R}_{\mathcal{U}}(\mathcal{H}) + \mathfrak{R}_{\mathcal{U}}(\mathcal{H})\right) + 3\sqrt{\frac{\log\frac{2}{\delta}}{2m}}
\end{align*}
\end{lemma}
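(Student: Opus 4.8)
The plan is to mirror the skeleton of the VC-dimension result in Lemma~\ref{lem:empirical}, but to replace the VC uniform-convergence step with a Rademacher-complexity bound applied to each of the two samples separately; this is exactly the structure underlying Corollary 7 of \citet{Mansour09:Domain}, so the task is really to re-derive that argument for the $\mathcal{H}$-divergence and the $\{-1,1\}$ convention used here. First I would unfold the definitions, writing $\tfrac{1}{2}d_{\mathcal{H}}(\distro,\distro') = \sup_{h\in\mathcal{H}} | \Pr_\distro[I(h)] - \Pr_{\distro'}[I(h)] |$ and likewise $\tfrac{1}{2}\hat{d}_{\mathcal{H}}(\mathcal{U},\mathcal{U}') = \sup_{h\in\mathcal{H}} | \widehat{\Pr}_{\mathcal{U}}[I(h)] - \widehat{\Pr}_{\mathcal{U}'}[I(h)] |$, where $\widehat{\Pr}$ is the empirical frequency over the sample. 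Since each $h$ maps into $\{-1,1\}$, I would rewrite every membership probability $\Pr[I(h)]=\Pr[h=1]$ as the expectation of the indicator $\tfrac{1+h}{2}$, so that the entire argument lives over the bounded class $\mathcal{G}=\{\tfrac{1+h}{2}:h\in\mathcal{H}\}$ with range $[0,1]$.

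The central step is a triangle-inequality decomposition, for each fixed $h$, of the true probability gap into (i) the deviation of the empirical mean from the true mean under $\distro$, (ii) the empirical gap between $\mathcal{U}$ and $\mathcal{U}'$, and (iii) the analogous deviation under $\distro'$. Taking the supremum over $\mathcal{H}$ and using subadditivity of the supremum, term (ii) is bounded by $\tfrac{1}{2}\hat{d}_{\mathcal{H}}(\mathcal{U},\mathcal{U}')$, while terms (i) and (iii) become one-sided uniform deviations $\sup_{g\in\mathcal{G}}(\mathbb{E}_\distro[g]-\widehat{\mathbb{E}}_{\mathcal{U}}[g])$ and its $\distro'$-counterpart. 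To each I would apply the standard Rademacher generalization theorem (McDiarmid together with a symmetrization / ghost-sample argument), which gives that with probability $1-\delta'$ the deviation is at most $2\mathfrak{R}_m(\mathcal{G})+\sqrt{\log(1/\delta')/(2m)}$, or its empirical-complexity variant with an extra $\sqrt{\log(2/\delta')/(2m)}$ of slack. A union bound over the two samples, each at confidence $\delta/2$, yields the $\sqrt{\log(2/\delta)/(2m)}$ term at overall confidence $1-\delta$.

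The remaining work is bookkeeping on constants, which is where I expect the only real friction. I would use that translating and rescaling a function class leaves its Rademacher complexity controlled — adding the constant $\tfrac12$ leaves it unchanged and the factor $\tfrac12$ on $h$ halves it — so $\mathfrak{R}_m(\mathcal{G})=\tfrac12\mathfrak{R}_m(\mathcal{H})$; combining this with the factor of two in this paper's absolute-value definition of $\mathfrak{R}$ relative to the textbook (no-absolute-value) version inside the generalization theorem, the two deviation terms together furnish the coefficient $4$ in front of $\mathfrak{R}_{\mathcal{U}}(\mathcal{H})+\mathfrak{R}_{\mathcal{U}'}(\mathcal{H})$, and the two square-root slacks combine into the factor $3$. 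Doubling the whole inequality to pass from $\tfrac12 d_{\mathcal{H}}$ back to $d_{\mathcal{H}}$ then reproduces the stated bound. The one subtlety to handle with care is the absolute value inside the supremum: since the divergence is symmetric in its two arguments, I would invoke that symmetry to argue that controlling the signed deviation in each direction already controls the absolute value, so no extra constant is lost.
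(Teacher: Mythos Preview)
The paper does not prove this lemma at all; it is stated as ``a modification of Corollary 7 from \citet{Mansour09:Domain}'' and simply invoked. Your sketch is precisely the standard derivation underlying that corollary --- unfold the divergence, triangle-inequality into an empirical gap plus two uniform-deviation terms, control each deviation by a Rademacher generalization bound, and union-bound --- so at the level of strategy there is nothing to compare: you are reconstructing the cited result rather than offering an alternative.

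The only place your sketch is shaky is exactly where you anticipated, the constant bookkeeping. Two points to tighten. First, ``adding the constant $\tfrac12$ leaves the Rademacher complexity unchanged'' is true for the textbook definition without the absolute value, but the paper's definition carries $|\cdot|$ inside the supremum; under that convention a constant shift is not exactly invariant and contributes an $\mathbb{E}_\sigma\bigl[|\tfrac{1}{m}\sum_i\sigma_i|\bigr]=O(m^{-1/2})$ term, which you should either absorb into the square-root slack or sidestep by invoking symmetry of $\mathcal{H}$ (so that the absolute value is redundant). Second, your claim that ``the two square-root slacks combine into the factor $3$'' does not survive a careful count: two applications of the Rademacher bound at confidence $\delta/2$, followed by doubling to pass from $\tfrac12 d_{\mathcal{H}}$ to $d_{\mathcal{H}}$, yields a constant strictly larger than $3$ in front of $\sqrt{\log(2/\delta)/(2m)}$. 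This is almost certainly just looseness (or a typo) in the lemma as stated, not an error in your method, but you should flag it rather than assert that the constants line up.
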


\begin{theorem}
Let $\mathcal{H}$ be a hypothesis space. If $\ \mathcal{U}_{S_0^{-1}},\ \mathcal{U}_{S_1^{-1}},\ \mathcal{U}_{T_0^{-1}},\  \mathcal{U}_{T_1^{-1}}$ are samples of size $m'$ each, drawn from $\mathcal{D}_{S_0^{-1}}$, $\mathcal{D}_{S_1^{-1}}$, $\mathcal{D}_{T_0^{-1}}$, and $\mathcal{D}_{T_1^{-1}}$ respectively, then for any $\delta\in(0,1)$, with probability at least $1-\delta$ (over the choice of samples), for every $g\in\mathcal{H}$ (where $\mathcal{H}$ is a symmetric hypothesis space) the distance from equal opportunity in the target space is bounded by
\begin{align*}
    \eopdiff{T}(g) \leq&\ \eopdiff{S}(g) + \frac{1}{2}\hat{d}_{\symhyp}(\mathcal{U}_{T_0^{-1}},\mathcal{U}_{S_0^{-1}}) + \frac{1}{2}\hat{d}_{\symhyp}(\mathcal{U}_{T_1^{-1}},\mathcal{U}_{S_1^{-1}}) \\
    &\ + 2\left(\mathfrak{R}_{U_{T_0^{-1}}}(\mathcal{H}) + \mathfrak{R}_{U_{S_0^{-1}}}(\mathcal{H}) + \mathfrak{R}_{U_{T_1^{-1}}}(\mathcal{H}) + \mathfrak{R}_{U_{S_1^{-1}}}(\mathcal{H})\right) \\
    &\ + 6\sqrt{\frac{\log\frac{2}{\delta}}{2m}}
    + \lambda_0^{-1} + \lambda_1^{-1},
\end{align*}
where $\lambda_\alpha^l=\e_{S_\alpha^l}(g^*,f)+\e_{T_\alpha^l}(g^*,f)$.
\end{theorem}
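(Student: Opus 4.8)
The plan is to mirror the VC-dimension proof of Theorem~\ref{thm:eop_transfer_bound} almost verbatim and swap in the Rademacher machinery only at the very last step. First I would reduce the target equal-opportunity distance to a sum of errors on the two negatively-labeled subdomains. Under the $\{-1,1\}$ encoding the true label on a negative subdomain is $f\equiv -1$, so for $g\in\mathcal{H}$ one has $\mathbb{E}[(1+g)/2]=\e(g,f)$ and $\mathbb{E}[(1-g)/2]=\e(-g,f)=1-\mathbb{E}[(1+g)/2]$. Assuming WLOG that the $\sensfeat=0$ term dominates, this rewrites
\[
\eopdiff{T}(g)=\e_{T_0^{-1}}(g,f)+\e_{T_1^{-1}}(-g,f)-1,
\]
and the same identity holds for $\eopdiff{S}(g)$ with $S$ in place of $T$. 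Here the symmetry of $\mathcal{H}$ is exactly what guarantees $-g\in\mathcal{H}$, so every subsequent step that invokes a hypothesis applies equally to $-g$ (this is the role played by $1-g$ in the VC proof).

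Next I would introduce the ideal joint hypothesis $g^*$ via the triangle inequality (Lemma~\ref{lem:triangle}) on each of the two target error terms, splitting each $\e_T(\cdot,f)$ into $\e_T(\cdot,g^*)+\e_T(g^*,f)$; the $\e_T(g^*,f)$ pieces accumulate into $\lambda_0^{-1}$ and $\lambda_1^{-1}$. I would then add and subtract the source errors $\e_{S_0^{-1}}(g,g^*)$ and $\e_{S_1^{-1}}(-g,g^*)$, bounding each difference $|\e_T(\cdot,g^*)-\e_S(\cdot,g^*)|$ by $\tfrac12 d_{\symhyp}(D_T,D_S)$ using Lemma~\ref{lem:sym_hyp} (again relying on symmetry of $\mathcal{H}$). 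A second application of Lemma~\ref{lem:triangle} converts the surviving $\e_S(\cdot,g^*)$ terms back into $\e_S(\cdot,f)+\e_S(g^*,f)$, folding the latter into the $\lambda$ terms and leaving exactly $\eopdiff{S}(g)$ plus the two population divergences $\tfrac12 d_{\symhyp}(D_{T_0^{-1}},D_{S_0^{-1}})+\tfrac12 d_{\symhyp}(D_{T_1^{-1}},D_{S_1^{-1}})$. Everything up to this line is identical to the proof of Theorem~\ref{thm:eop_transfer_bound}.

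The only genuinely new step is passing from the population divergences to their empirical counterparts. Rather than the VC bound of Lemma~\ref{lem:empirical}, I would apply the Rademacher bound of Lemma~\ref{lem:rademacher_dist} to each $d_{\symhyp}(D_{T_\alpha^{-1}},D_{S_\alpha^{-1}})$, replacing $d_{\symhyp}$ by $\hat{d}_{\symhyp}$ plus Rademacher complexities of the two samples and a $3\sqrt{\log(2/\delta)/(2m)}$ concentration term. Since each divergence enters with coefficient $\tfrac12$, the lemma's factor $4$ becomes the stated factor $2$ on the four Rademacher terms, and a union bound over the two applications assembles the overall $1-\delta$ confidence statement. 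The main obstacle—indeed essentially the only subtle point—is the bookkeeping here: keeping the per-application failure probabilities consistent through the union bound to recover the displayed concentration constant, and matching the complexity produced by Lemma~\ref{lem:rademacher_dist} (nominally over the class $\symhyp$) to the $\mathfrak{R}(\mathcal{H})$ terms appearing in the statement, which is justified because disagreements of a symmetric $\mathcal{H}$ are controlled by the Rademacher complexity of $\mathcal{H}$ itself.
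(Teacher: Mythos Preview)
Your proposal is correct and follows essentially the same approach as the paper's proof: the paper performs exactly the rewriting $\eopdiff{T}(g)=\e_{T_0^{-1}}(g,f)+\e_{T_1^{-1}}(-g,f)-1$, then applies Lemma~\ref{lem:triangle}, Lemma~\ref{lem:sym_hyp}, and Lemma~\ref{lem:triangle} again in the order you describe, and finally swaps in Lemma~\ref{lem:rademacher_dist} in place of the VC bound to pass from population to empirical divergences. Your remarks on the role of symmetry (so that $-g\in\mathcal{H}$) and on the bookkeeping for the concentration/union-bound step are precisely the points the paper handles implicitly.
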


\begin{proof}
Without loss of generality assume $\mathbb{E}_{Z_0^{-1} \sim D_{S_0^{-1}}} \geq \mathbb{E}_{Z_1^{-1} \sim D_{S_1^{-1}}}$. Then we can rewrite $\eopdiff{S}$ as follows.
\begin{align}
    \eopdiff{S}(g) =&\ \mathbb{E}_{Z_0^{-1} \sim D_{S_0^{-1}}} \left[\frac{1+g(z_0^{-1})}{2}\right] - \mathbb{E}_{Z_1^{-1} \sim D_{S_1^{-1}}} \left[\frac{1+g(z_1^{-1})}{2}\right] \nonumber \\
    =&\ \mathbb{E}_{Z_0^{-1} \sim D_{S_0^{-1}}} \left[\frac{1+g(z_0^{-1})}{2}\right] + \mathbb{E}_{Z_1^{-1} \sim D_{S_1^{-1}}} \left[1-\frac{1+g(z_1^{-1})}{2}\right] - 1 \nonumber \\
    =&\ \mathbb{E}_{Z_0^{-1} \sim D_{S_0^{-1}}} \left[\frac{1+g(z_0^{-1})}{2}\right] + \mathbb{E}_{Z_1^{-1} \sim D_{S_1^{-1}}} \left[\frac{1-g(z_1^{-1})}{2}\right] - 1 \nonumber \\
    =&\ \mathbb{E}_{Z_0^{-1} \sim D_{S_0^{-1}}} \left[\frac{g(z_0^{-1}) - f(z_0^{-1})}{2}\right] + \mathbb{E}_{Z_1^{-1} \sim D_{S_1^{-1}}} \left[\frac{g(z_1^{-1}) + f(z_1^{-1})}{2}\right] - 1  \label{eq:thm_eop_rademacher_pf1}\\
    =&\ \e_{S_0^{-1}}(g,f) + \e_{S_1^{-1}}(-g,f) -1 \nonumber,
\end{align}
where \ref{eq:thm_eop_rademacher_pf1} is due to the fact that $f(z_0^{-1})=-1$ by definition.

We now have the tools to find an upper bound on $\eopdiff{T}(g)$.
\begin{align}
    \eopdiff{T}(g) =&\ \e_{T_0^{-1}}(g,f) + \e_{T_1^{-1}}(-g,f) -1 \nonumber \\
    \leq&\ \e_{T_0^{-1}}(g,g^*) + \e_{T_0^{-1}}(f, g^*) + \e_{T_1^{-1}}(-g,g^*) + \e_{T_1^{-1}}(f,g^*) -1 \label{eq:thm_eop_rademacher_pf2} \\
    =&\ \e_{T_0^{-1}}(f, g^*) + \e_{T_0^{-1}}(g,g^*) + \e_{S_0^{-1}}(g,g^*) - \e_{S_0^{-1}}(g,g^*) \nonumber \\
    &\ + \e_{T_1^{-1}}(f,g^*) + \e_{T_1^{-1}}(-g,g^*) + \e_{S_1^{-1}}(-g,g^*) - \e_{S_1^{-1}}(-g,g^*) - 1 \nonumber \\
    \leq&\ \e_{T_0^{-1}}(g^*, f) + \e_{S_0^{-1}}(g,g^*) + |\e_{T_0^{-1}}(g,g^*) - \e_{S_0^{-1}}(g,g^*)| \nonumber \\
    &\ + \e_{T_1^{-1}}(g^*, f) + \e_{S_1^{-1}}(-g,g^*) + |\e_{T_1^{-1}}(-g,g^*) - \e_{S_1^{-1}}(-g,g^*)| - 1 \nonumber \\
    \leq&\ \e_{T_0^{-1}}(g^*, f) + \e_{S_0^{-1}}(g,g^*) + \frac{1}{2}d_{\symhyp}(D_{T_0^{-1}}, D_{S_0^{-1}}) \nonumber \\
    &\ + \e_{T_1^{-1}}(g^*,f) + \e_{S_1^{-1}}(-g,g^*) + \frac{1}{2}d_{\symhyp}(D_{T_1^{-1}}, D_{S_1^{-1}}) - 1 \label{eq:thm_eop_rademacher_pf3} \\
    \leq&\ \e_{T_0^{-1}}(g^*,f) + \e_{S_0^{-1}}(g,f) + \e_{S_0^{-1}}(g^*,f) + \frac{1}{2}d_{\symhyp}(D_{T_0^{-1}}, D_{S_0^{-1}}) \nonumber \\
    &\ + \e_{T_1^{-1}}(g^*,f) + \e_{S_1^{-1}}(-g,f) + \e_{S_1^{-1}}(g^*,f) + \frac{1}{2}d_{\symhyp}(D_{T_1^{-1}}, D_{S_1^{-1}}) - 1 \label{eq:thm_eop_rademacher_pf4} \\
    =&\ \e_{S_0^{-1}}(g,f) + \e_{S_1^{-1}}(-g,f) - 1 + \frac{1}{2}d_{\symhyp}(D_{T_0^{-1}}, D_{S_0^{-1}}) \nonumber \\
    &\ + \frac{1}{2}d_{\symhyp}(D_{T_1^{-1}}, D_{S_1^{-1}}) + \lambda_0^{-1} + \lambda_1^{-1} \nonumber   \\
    =&\ \eopdiff{S}(g) + \frac{1}{2}d_{\symhyp}(D_{T_0^{-1}}, D_{S_0^{-1}}) + \frac{1}{2}d_{\symhyp}(D_{T_1^{-1}}, D_{S_1^{-1}}) + \lambda_0^{-1} + \lambda_1^{-1} \label{eq:thm_eop_rademacher_pf5}  \\
    \leq&\ \eopdiff{S}(g) + \lambda_0^{-1} + \lambda_1^{-1}  \nonumber \\
    &\ + \frac{1}{2}\left(\hat{d}_{\symhyp}(D_{T_0^{-1}}, D_{S_0^{-1}}) + 4\left(\hat{\mathfrak{R}}_{U_{T_0^{-1}}}(\mathcal{H}) + \hat{\mathfrak{R}}_{U_{S_0^{-1}}}(\mathcal{H})\right) + 6\sqrt{\frac{\log\frac{2}{\delta}}{2m}} \right) \nonumber \\
    &\ + \frac{1}{2}\left(\hat{d}_{\symhyp}(D_{T_1^{-1}}, D_{S_1^{-1}}) + 4\left( \hat{\mathfrak{R}}_{U_{T_0^{-1}}}(\mathcal{H}) + \hat{\mathfrak{R}}_{U_{S_0^{-1}}}(\mathcal{H}) \right) + 6\sqrt{\frac{\log\frac{2}{\delta}}{2m}} \right) \label{eq:thm_eop_rademacher_pf6}  \\
    =&\ \eopdiff{S}(g) + \frac{1}{2}\hat{d}_{\symhyp}(D_{T_0^{-1}}, D_{S_0^{-1}}) + \frac{1}{2}\hat{d}_{\symhyp}(D_{T_1^{-1}}, D_{S_1^{-1}}) \nonumber \\
    &\ + 2\left(\hat{\mathfrak{R}}_{U_{T_0^{-1}}}(\mathcal{H}) + \hat{\mathfrak{R}}_{U_{S_0^{-1}}}(\mathcal{H}) + \hat{\mathfrak{R}}_{U_{T_0^{-1}}}(\mathcal{H}) + \hat{\mathfrak{R}}_{U_{S_0^{-1}}}(\mathcal{H}) \right) \nonumber \\
    &\ + 6\sqrt{\frac{\log\frac{2}{\delta}}{2m}} + \lambda_0^{-1} + \lambda_1^{-1} \nonumber,
\end{align}
where Eq. \ref{eq:thm_eop_rademacher_pf2} is due to Lemma \ref{lem:triangle}, Eq. \ref{eq:thm_eop_rademacher_pf3} is due to Lemma \ref{lem:sym_hyp}, Eq. \ref{eq:thm_eop_rademacher_pf4} is due to Lemma \ref{lem:triangle}, Eq. \ref{eq:thm_eop_rademacher_pf5} is due to the definition of $\eopdiff{S}(g)$, and Eq. \ref{eq:thm_eop_rademacher_pf6} is due to Lemma~\ref{lem:rademacher_dist}.
\end{proof}

\begin{theorem}
Let $\mathcal{H}$ be a hypothesis space. If $\ \mathcal{U}_{S_0^{-1}},\ \mathcal{U}_{S_1^{-1}},\ \mathcal{U}_{T_0^{-1}},\  \mathcal{U}_{T_1^{-1}} \ \mathcal{U}_{S_0^{1}},\ \mathcal{U}_{S_1^{1}},\ \mathcal{U}_{T_0^{1}},\  \mathcal{U}_{T_1^{1}}$ are samples of size $m'$ each, drawn from $\mathcal{D}_{S_0^{-1}}$, $\mathcal{D}_{S_1^{-1}}$, $\mathcal{D}_{T_0^{-1}}$, $\mathcal{D}_{T_1^{-1}}, \mathcal{D}_{S_0^{1}}$, $\mathcal{D}_{S_1^{1}}$, $\mathcal{D}_{T_0^{1}}$, and $\mathcal{D}_{T_1^{1}}$ respectively, then for any $\delta\in(0,1)$, with probability at least $1-\delta$ (over the choice of samples), for every $g\in\mathcal{H}$ (where $\mathcal{H}$ is a symmetric hypothesis space) the distance from equalized odds in the target space is bounded by
\begin{align*}
    \eodiff{T}(g) \leq&\ \eodiff{S}(g) + \frac{1}{2}\left( \hat{d}_{\symhyp}(\mathcal{U}_{S_0^{-1}}, \mathcal{U}_{T_0^{-1}}) + \hat{d}_{\symhyp}(\mathcal{U}_{S_1^{-1}}, \mathcal{U}_{T_1^{-1}}) \right. \\
    &\ \left. + \hat{d}_{\symhyp}(\mathcal{U}_{S_0^{1}}, \mathcal{U}_{T_0^{1}}) + \hat{d}_{\symhyp}(\mathcal{U}_{S_1^{1}}, \mathcal{U}_{T_1^{1}}) \right) \\
    &\ + 2\left( \hat{\mathfrak{R}}_{U_{S_0^{-1}}}(\mathcal{H}) + \hat{\mathfrak{R}}_{U_{T_0^{-1}}}(\mathcal{H}) \right. + \hat{\mathfrak{R}}_{U_{S_1^{-1}}}(\mathcal{H}) + \hat{\mathfrak{R}}_{U_{T_1^{-1}}}(\mathcal{H}) \\
    &\ + \hat{\mathfrak{R}}_{U_{S_0^{1}}}(\mathcal{H}) + \hat{\mathfrak{R}}_{U_{T_0^{1}}}(\mathcal{H}) \left. + \hat{\mathfrak{R}}_{U_{S_1^{1}}}(\mathcal{H}) + \hat{\mathfrak{R}}_{U_{T_1^{1}}}(\mathcal{H}) \right) \\
    &\ + 12\sqrt{\frac{\log\frac{2}{\delta}}{2m}} + \lambda_\mathit{EO},
\end{align*}
where $\lambda_\mathit{EO} = \lambda_0^{-1} + \lambda_1^{-1} + \lambda_0^1 + \lambda_1^1$, and $\lambda_\alpha^l=\e_{S_\alpha^l}(g^*,f)+\e_{T_\alpha^l}(g^*,f)$.
\end{theorem}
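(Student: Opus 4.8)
The plan is to follow the VC-dimension equalized-odds argument line for line, but to replace its concluding appeal to Lemma~\ref{lem:empirical} with the Rademacher divergence bound of Lemma~\ref{lem:rademacher_dist}; equivalently, this is the four-quadrant analogue of the Rademacher equal-opportunity proof. I would first assume, without loss of generality, that both the false-positive-rate gap and the false-negative-rate gap are nonnegative, so that the two absolute values in $\eodiff{S}(g)$ can be removed. Working in the $\{-1,1\}$ encoding and using the identity $\mathbb{E}[\tfrac{1+g}{2}]$ together with the fact that $f\equiv-1$ on the negatively labeled subspaces and $f\equiv 1$ on the positively labeled subspaces, I would rewrite $\eodiff{S}(g)$ as a sum of four source errors of the form $\e_{S_\alpha^l}(\pm g,f)$ minus the constant $2$, exactly as in the rewriting that opens the Rademacher equal-opportunity proof but now carried out on all four quadrants.

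The middle of the argument is then identical in spirit to the VC equalized-odds proof. For each of the four quadrants I would express the corresponding target error, insert the ideal joint hypothesis $g^*$ via the triangle inequality (Lemma~\ref{lem:triangle}) to produce the $\e_{T_\alpha^l}(f,g^*)$ pieces; add and subtract the matching source error $\e_{S_\alpha^l}(\cdot,g^*)$; bound each difference $|\e_{T_\alpha^l}(\cdot,g^*)-\e_{S_\alpha^l}(\cdot,g^*)|$ by $\tfrac12 d_{\symhyp}(D_{T_\alpha^l},D_{S_\alpha^l})$ using Lemma~\ref{lem:sym_hyp} and the symmetry of $\mathcal{H}$; and apply the triangle inequality a second time to split $\e_{S_\alpha^l}(\cdot,g^*)$ into $\e_{S_\alpha^l}(\cdot,f)+\e_{S_\alpha^l}(g^*,f)$. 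Summing the four quadrants, the $\e_{S_\alpha^l}(g^*,f)$ and $\e_{T_\alpha^l}(g^*,f)$ pieces collect into $\lambda_{\mathit{EO}}=\lambda_0^{-1}+\lambda_1^{-1}+\lambda_0^1+\lambda_1^1$, while the four surviving source errors reassemble into $\eodiff{S}(g)$. This yields the intermediate true-divergence bound carrying $\tfrac12 d_{\symhyp}$ on all four quadrants, paralleling the step just before equation~\eqref{eq:eo_5}.

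The only genuinely new step is the final substitution: rather than Lemma~\ref{lem:empirical}, I would invoke Lemma~\ref{lem:rademacher_dist} on each of the four divergences $d_{\symhyp}(D_{T_\alpha^l},D_{S_\alpha^l})$. Each invocation replaces a true divergence by its empirical $\hat{d}_{\symhyp}$, plus $4$ times the sum of the two empirical Rademacher complexities of the relevant samples, plus a $\sqrt{\log(2/\delta)/2m}$ concentration term; carrying the outer factor $\tfrac12$ through and summing over the four quadrants produces the $\tfrac12\hat{d}_{\symhyp}$ terms, the eight Rademacher terms with coefficient $2$, and the $12\sqrt{\log(2/\delta)/2m}$ coefficient in the statement. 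The main obstacle is purely bookkeeping: getting the WLOG sign conventions consistent across all four quadrants (the roles of $g$ and $-g$ swap between the negatively and positively labeled segments), and handling the union bound over the four independent divergence estimates so that the overall confidence $1-\delta$ and the constant in front of the concentration term come out exactly as stated. There is no conceptual difficulty beyond fusing the four-quadrant accounting of the equalized-odds proof with the Rademacher divergence substitution.
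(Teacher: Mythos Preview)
Your proposal is correct and matches the paper's proof essentially line for line: the paper also removes the absolute values by a WLOG ordering assumption, rewrites $\eodiff{T}(g)$ as $\e_{T_0^{-1}}(g,f)+\e_{T_1^{-1}}(-g,f)+\e_{T_0^1}(g,f)+\e_{T_1^1}(-g,f)-2$, runs the Lemma~\ref{lem:triangle}/Lemma~\ref{lem:sym_hyp}/Lemma~\ref{lem:triangle} cycle on each of the four quadrants to reach the true-divergence bound, and then swaps in Lemma~\ref{lem:rademacher_dist} for each $d_{\symhyp}$ term. One small remark: in the paper's rewriting the $\pm g$ pattern tracks the sensitive attribute index $\alpha$ (use $g$ when $\alpha=0$, $-g$ when $\alpha=1$) rather than the label $l$, so your comment that ``the roles of $g$ and $-g$ swap between the negatively and positively labeled segments'' is slightly off, but this is only a bookkeeping detail and does not affect the argument.
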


\begin{proof}
Without loss of generality assume $\mathbb{E}_{Z_0^{-1} \sim D_{S_0^{-1}}} \geq \mathbb{E}_{Z_1^{-1} \sim D_{S_1^{-1}}}$ and $\mathbb{E}_{Z_0^{1} \sim D_{S_0^{1}}} \geq \mathbb{E}_{Z_1^{1} \sim D_{S_1^{1}}}$. Then we can rewrite $\eopdiff{S}$ as follows.
\begin{align*}
    \eodiff{T}(g) =&\ \mathbb{E}_{Z_0^{-1}\sim D_{T_0^{-1}}}\left[\frac{1+g(z_0^{-1})}{2}\right] - \mathbb{E}_{Z_1^{-1}\sim D_{T_1^{-1}}}\left[\frac{1+g(z_1^{-1})}{2}\right] \\
        &\ + \mathbb{E}_{Z_0^{1}\sim D_{T_0^{1}}}\left[\frac{1+g(z_0^{1})}{2}\right] - \mathbb{E}_{Z_1^{1}\sim D_{T_1^{1}}}\left[\frac{1+g(z_1^{1})}{2}\right] \\
    =&\ \mathbb{E}_{Z_0^{-1}\sim D_{T_0^{-1}}}\left[\frac{1+g(z_0^{-1})}{2}\right] + \mathbb{E}_{Z_1^{-1}\sim D_{T_1^{-1}}}\left[1 - \frac{1+g(z_1^{-1})}{2}\right] - 1 \\
        &\ + \mathbb{E}_{Z_0^{1}\sim D_{T_0^{1}}}\left[\frac{1+g(z_0^{1})}{2}\right] + \mathbb{E}_{Z_1^{1}\sim D_{T_1^{1}}}\left[1 - \frac{1+g(z_1^{1})}{2}\right] - 1 \\
    =&\ \mathbb{E}_{Z_0^{-1}\sim D_{T_0^{-1}}}\left[\frac{1+g(z_0^{-1})}{2}\right] + \mathbb{E}_{Z_1^{-1}\sim D_{T_1^{-1}}}\left[ \frac{1-g(z_1^{-1})}{2}\right]  \\
        &\ + \mathbb{E}_{Z_0^{1}\sim D_{T_0^{1}}}\left[\frac{1+g(z_0^{1})}{2}\right] + \mathbb{E}_{Z_1^{1}\sim D_{T_1^{1}}}\left[\frac{1-g(z_1^{1})}{2}\right] - 2 \\
    =&\ \mathbb{E}_{Z_0^{-1}\sim D_{T_0^{-1}}}\left[\frac{|g(z_0^{-1}) - f(z_0^{-1})|}{2}\right] + \mathbb{E}_{Z_1^{-1}\sim D_{T_1^{-1}}}\left[ \frac{|g(z_1^{-1}) + f(z_1^{-1})|}{2}\right]  \\
        &\ + \mathbb{E}_{Z_0^{1}\sim D_{T_0^{1}}}\left[\frac{|g(z_0^{1}) - f(z_0^{1})|}{2}\right]  + \mathbb{E}_{Z_1^{1}\sim D_{T_1^{1}}}\left[\frac{|g(z_1^{1}) + f(z_1^{1})|}{2}\right] - 2 \\
    =&\ \e_{T_0^{-1}}(g,f) + \e_{T_1^{-1}}(-g,f) + \e_{T_0^1}(g,f) + \e_{T_1^1}(-g,f) - 2
\end{align*}

Using this and previous lemmas we have
\begin{align}
    \eodiff{T}(g) =&\ \e_{T_0^{-1}}(g,f) + \e_{T_1^{-1}}(-g,f) + \e_{T_0^1}(g,f) + \e_{T_1^1}(-g,f) - 2 \nonumber \\
    \leq&\ \e_{T_0^{-1}}(g,g^*) + \e_{T_0^{-1}}(f,g^*) + \e_{T_1^{-1}}(-g,g^*) + \e_{T_1^{-1}}(f,g^*) \nonumber \\
        &\ + \e_{T_0^1}(g,g^*) + \e_{T_0^1}(f,g^*) + \e_{T_1^1}(-g,g^*) + \e_{T_1^1}(f,g^*) - 2 \label{eq:eo_rademacher_1} \\
    =&\ \e_{T_0^{-1}}(f,g^*) + \e_{T_0^{-1}}(g,g^*) + \e_{S_0^{-1}}(g,g^*) - \e_{S_0^{-1}}(g,g^*) \nonumber \\
        &\ + \e_{T_1^{-1}}(f,g^*) + \e_{T_1^{-1}}(-g,g^*) + \e_{S_1^{-1}}(-g,g^*) - \e_{S_1^{-1}}(-g,g^*) \nonumber \\
        &\ + \e_{T_0^1}(f,g^*) + \e_{T_0^1}(g,g^*) + \e_{S_0^1}(g,g^*) - \e_{S_0^1}(g,g^*) \nonumber \\
        &\ + \e_{T_1^1}(f,g^*) + \e_{T_1^1}(-g,g^*) + \e_{S_1^1}(-g,g^*) - \e_{S_1^1}(-g,g^*) - 2 \nonumber \\
    \leq&\ \e_{T_0^{-1}}(f,g^*) + \e_{S_0^{-1}}(g,g^*) + \left| \e_{T_0^{-1}}(g,g^*) - \e_{S_0^{-1}}(g,g^*) \right| \nonumber \\
        &\ + \e_{T_1^{-1}}(f,g^*) + \e_{S_1^{-1}}(-g,g^*) + \left| \e_{T_1^{-1}}(-g,g^*) - \e_{S_1^{-1}}(-g,g^*) \right| \nonumber \\
        &\ + \e_{T_0^1}(f,g^*) + \e_{S_0^1}(g,g^*) + \left| \e_{T_0^1}(g,g^*) - \e_{S_0^1}(g,g^*) \right| \nonumber \\
        &\ + \e_{T_1^1}(f,g^*) + \e_{S_1^1}(-g,g^*) + \left| \e_{T_1^1}(-g,g^*) - \e_{S_1^1}(-g,g^*) \right| - 2 \nonumber \\
    \leq&\ \e_{T_0^{-1}}(f,g^*) + \e_{S_0^{-1}}(g,g^*) + \frac{1}{2}d_{\symhyp}(D_{T_0^{-1}}, D_{S_0^{-1}}) \nonumber \\
        &\ + \e_{T_1^{-1}}(f,g^*) + \e_{S_1^{-1}}(-g,g^*) + \frac{1}{2}d_{\symhyp}(D_{T_1^{-1}}, D_{S_1^{-1}}) \nonumber \\
        &\ + \e_{T_0^1}(f,g^*) + \e_{S_0^1}(g,g^*) + \frac{1}{2}d_{\symhyp}(D_{T_0^{1}}, D_{S_0^{1}}) \nonumber \\
        &\ + \e_{T_1^1}(f,g^*) + \e_{S_1^1}(-g,g^*) + \frac{1}{2}d_{\symhyp}(D_{T_1^{1}}, D_{S_1^{1}}) - 2 \label{eq:eo_rademacher_2} \\
    \leq&\ \e_{T_0^{-1}}(f,g^*) + \e_{S_0^{-1}}(g,f) + \e_{S_0^{-1}}(g^*,f) + \frac{1}{2}d_{\symhyp}(D_{T_0^{-1}}, D_{S_0^{-1}}) \nonumber \\
        &\ + \e_{T_1^{-1}}(f,g^*) + \e_{S_1^{-1}}(-g,f) + \e_{S_1^{-1}}(g^*,f) + \frac{1}{2}d_{\symhyp}(D_{T_1^{-1}}, D_{S_1^{-1}}) \nonumber \\
        &\ + \e_{T_0^1}(f,g^*) + \e_{S_0^1}(g,f) + \e_{S_0^1}(g^*,f) + \frac{1}{2}d_{\symhyp}(D_{T_0^{1}}, D_{S_0^{1}}) \nonumber \\
        &\ + \e_{T_1^1}(f,g^*) + \e_{S_1^1}(-g,f) + \e_{S_1^1}(g^*,f) + \frac{1}{2}d_{\symhyp}(D_{T_1^{1}}, D_{S_1^{1}}) - 2 \label{eq:eo_rademacher_3} \\
    =&\ \e_{S_0^{-1}}(g,f) + \e_{S_1^{-1}}(-g,f) + \e_{S_0^1}(g,f) + \e_{S_1^1}(-g,f) - 2 \nonumber \\
        &\ + \frac{1}{2}d_{\symhyp}(D_{T_0^{-1}}, D_{S_0^{-1}}) + \frac{1}{2}d_{\symhyp}(D_{T_1^{-1}}, D_{S_1^{-1}}) \nonumber \\
        &\ + \frac{1}{2}d_{\symhyp}(D_{T_0^{1}}, D_{S_0^{1}}) + \frac{1}{2}d_{\symhyp}(D_{T_1^{1}}, D_{S_1^{1}}) + \lambda_0^{-1} + \lambda_1^{-1} + \lambda_0^1 + \lambda_1^1 \nonumber \\
    =&\ \eodiff{S}(g) + \frac{1}{2}d_{\symhyp}(D_{T_0^{-1}}, D_{S_0^{-1}}) + \frac{1}{2}d_{\symhyp}(D_{T_1^{-1}}, D_{S_1^{-1}}) \nonumber \\
        &\ + \frac{1}{2}d_{\symhyp}(D_{T_0^{1}}, D_{S_0^{1}}) + \frac{1}{2}d_{\symhyp}(D_{T_1^{1}}, D_{S_1^{1}}) + \lambda_0^{-1} + \lambda_1^{-1} + \lambda_0^1 + \lambda_1^1 \nonumber \\
    \leq&\ \eodiff{S}(g) + \lambda_0^{-1} + \lambda_1^{-1} + \lambda_0^1 + \lambda_1^1 \nonumber \\
        &\ + \frac{1}{2}\left( \hat{d}_{\symhyp}(D_{T_0^{-1}}, D_{S_0^{-1}}) + 4\left( \hat{\mathfrak{R}}_{U_{S_0^{-1}}}(\mathcal{H}) + \hat{\mathfrak{R}}_{U_{T_0^{-1}}}(\mathcal{H}) \right) + 6\sqrt{\frac{\log\frac{2}{\delta}}{2m}} \right) \nonumber \\
        &\ + \frac{1}{2}\left( \hat{d}_{\symhyp}(D_{T_1^{-1}}, D_{S_1^{-1}}) + 4\left( \hat{\mathfrak{R}}_{U_{S_1^{-1}}}(\mathcal{H}) + \hat{\mathfrak{R}}_{U_{T_1^{-1}}}(\mathcal{H}) \right) + 6\sqrt{\frac{\log\frac{2}{\delta}}{2m}} \right) \nonumber \\
        &\ + \frac{1}{2}\left( \hat{d}_{\symhyp}(D_{T_0^{1}}, D_{S_0^{1}})  + 4\left( \hat{\mathfrak{R}}_{U_{S_0^{1}}}(\mathcal{H}) + \hat{\mathfrak{R}}_{U_{T_0^{1}}}(\mathcal{H}) \right) + 6\sqrt{\frac{\log\frac{2}{\delta}}{2m}} \right) \nonumber \\
        &\ + \frac{1}{2}\left( \hat{d}_{\symhyp}(D_{T_1^{1}}, D_{S_1^{1}})  + 4\left( \hat{\mathfrak{R}}_{U_{S_1^{1}}}(\mathcal{H}) + \hat{\mathfrak{R}}_{U_{T_1^{1}}}(\mathcal{H}) \right) + 6\sqrt{\frac{\log\frac{2}{\delta}}{2m}} \right) \label{eq:eo_rademacher_4}\\
    =&\ \eodiff{S}(g) + \frac{1}{2}\left( \hat{d}_{\symhyp}(\mathcal{U}_{S_0^{-1}}, \mathcal{U}_{T_0^{-1}}) + \hat{d}_{\symhyp}(\mathcal{U}_{S_1^{-1}}, \mathcal{U}_{T_1^{-1}}) \right. \nonumber \\
        &\ \left. + \hat{d}_{\symhyp}(\mathcal{U}_{S_0^{1}}, \mathcal{U}_{T_0^{1}}) + \hat{d}_{\symhyp}(\mathcal{U}_{S_1^{1}}, \mathcal{U}_{T_1^{1}}) \right) \nonumber \\
        &\ + 2\left( \hat{\mathfrak{R}}_{U_{S_0^{-1}}}(\mathcal{H}) + \hat{\mathfrak{R}}_{U_{T_0^{-1}}}(\mathcal{H}) \right. + \hat{\mathfrak{R}}_{U_{S_1^{-1}}}(\mathcal{H}) + \hat{\mathfrak{R}}_{U_{T_1^{-1}}}(\mathcal{H}) \nonumber \\
        &\ + \hat{\mathfrak{R}}_{U_{S_0^{1}}}(\mathcal{H}) + \hat{\mathfrak{R}}_{U_{T_0^{1}}}(\mathcal{H}) \left. + \hat{\mathfrak{R}}_{U_{S_1^{1}}}(\mathcal{H}) + \hat{\mathfrak{R}}_{U_{T_1^{1}}}(\mathcal{H}) \right) \nonumber \\
        &\ + 12\sqrt{\frac{\log\frac{2}{\delta}}{2m}} + \lambda_0^{-1} + \lambda_1^{-1} + \lambda_0^1 + \lambda_1^1 \nonumber,
\end{align}
where Eq. \ref{eq:eo_rademacher_1} is due to Lemma \ref{lem:triangle}, Eq. \ref{eq:eo_rademacher_2} is due to Lemma \ref{lem:sym_hyp}, Eq. \ref{eq:eo_rademacher_3} is due to Lemma \ref{lem:triangle}, and \ref{eq:eo_rademacher_4} is due to Lemma~\ref{lem:rademacher_dist}.
\end{proof}

\section{Experimental setup}
For the UCI adult dataset we used all 14 features as provided in \url{https://archive.ics.uci.edu/ml/machine-learning-databases/adult/adult.names}. 
The original train/test split is used.
For the COMPAS dataset we used the features provided in \url{https://github.com/propublica/compas-analysis/blob/master/compas-scores.csv}, and predict the risk of recidivism (decile\_score) for each row.

We did 10-fold cross-validation and choose the hyperparameters with the best performance on the validation data.
$64$ dimension embedding is used for categorical features and $256$ hidden units are used in the model. We did parameter search and found $10$K steps yields a good balance of runtime and accuracy. Each run takes about 1hr for UCI data and 0.5hrs for COMPAS on a single CPU with 2GB RAM. Increasing learning rate speeds up experiments but also hurts accuracy slightly (e.g., \textasciitilde2pp decrease on UCI).

For range of parameters, we have considered the following:
(1) batch size: $[64, 128, 256, 512]$; (2) learning rate: $[0.01, 0.1, 1.0]$; (3) number of hidden units: $[64, 128, 256, 512]$; (4) embedding dimension: $[32, 64, 128]$. (5) number of steps: $[5000, 10000, 20000, 50000]$. 

\section{Experiments}
\label{appendix_exp}
\subsection{Experiment Results for fairness on UCI and COMPAS}
Figure~\ref{fig:fpr_diff_race_uci} depicts the results of the analysis for
transferring from gender to race, while Figure~\ref{fig:fpr_diff_gender_uci} shows the
results for transferring from race to gender, on the UCI dataset. 
Figure~\ref{fig:fpr_diff_race_compas} and Figure~\ref{fig:fpr_diff_gender_compas} show the results on the COMPAS dataset. 
The line and the shaded areas show the mean and the standard error of the mean across 30 trials. 
These experiments show that the Transfer model is effective in decreasing the FPR gap in the target domain and is more sample efficient than previous methods.
\begin{figure}
    \centering
    \begin{subfigure}{0.23\columnwidth}
    \includegraphics[width=\textwidth]{plots/uci/mmd_race_compare_race_fpr_diff_gender1000_race50.png}
    \caption{$50$ race samples.}
    \label{fig:fpr_diff_race_50}
    \end{subfigure}
    \hspace{0.01\columnwidth}
    \begin{subfigure}{0.23\columnwidth}
    \includegraphics[width=\textwidth]{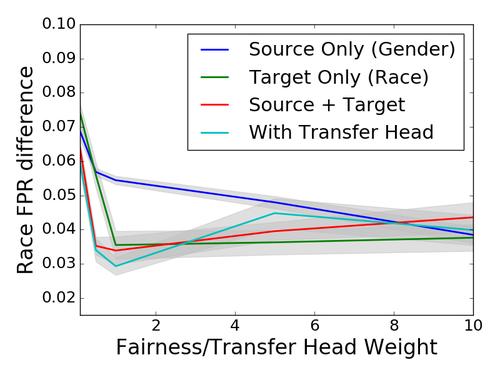}
    \caption{$100$ race samples.}
    \label{fig:fpr_diff_race_100}
    \end{subfigure}
    \hspace{0.01\columnwidth}
    \begin{subfigure}{0.23\columnwidth}
    \includegraphics[width=\textwidth]{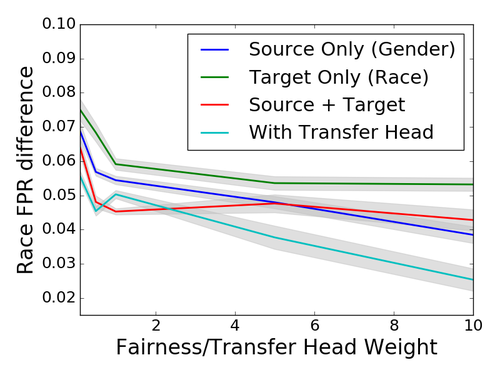}
    \caption{$500$ race samples.}
    \end{subfigure}
    \hspace{0.01\columnwidth}
    \begin{subfigure}{0.23\columnwidth}
    \includegraphics[width=\textwidth]{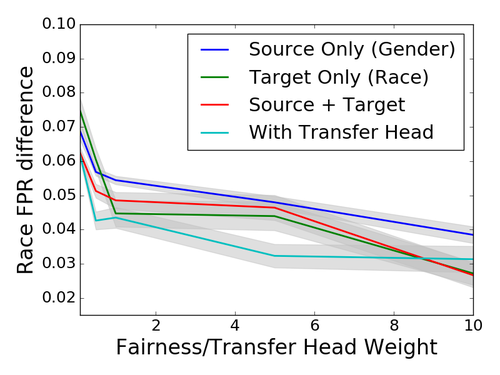}
    \caption{$1000$ race samples.}
    \end{subfigure}
    \caption{Gender $\rightarrow$ Race on the UCI dataset. Comparison of FPR difference on sensitive attribute \textit{race}, by transferring from the source domain (1000 samples for each gender) to the target domain (varying samples for each race as indicated in the caption).}
    \label{fig:fpr_diff_race_uci}
\end{figure}

\begin{figure}
    \centering
    \begin{subfigure}{0.23\columnwidth}
    \includegraphics[width=\textwidth]{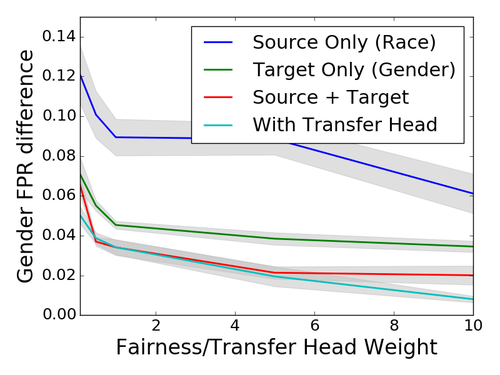}
    \caption{$50$ gender samples.}
    \label{fig:fpr_diff_gender_50}
    \end{subfigure}
    \hspace{0.01\columnwidth}
    \begin{subfigure}{0.23\columnwidth}
    \includegraphics[width=\textwidth]{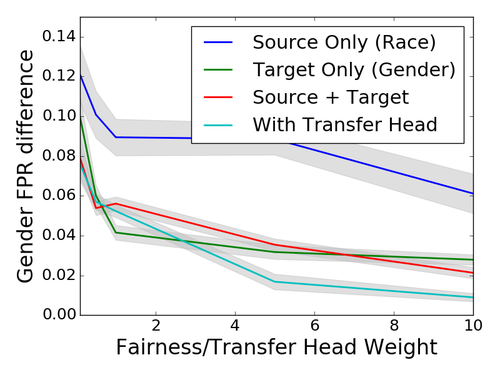}
    \caption{$100$ gender samples.}
    \label{fig:fpr_diff_gender_100}
    \end{subfigure}
    \hspace{0.01\columnwidth}
    \begin{subfigure}{0.23\columnwidth}
    \includegraphics[width=\textwidth]{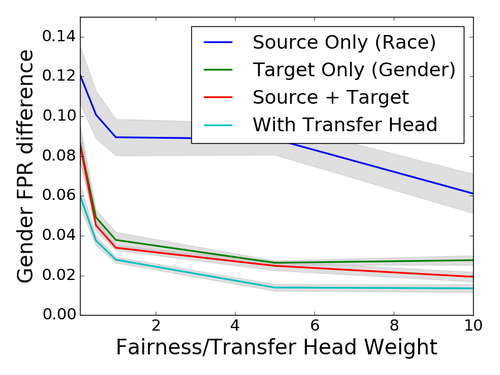}
    \caption{$500$ gender samples.}
    \end{subfigure}
    \hspace{0.01\columnwidth}
    \begin{subfigure}{0.23\columnwidth}
    \includegraphics[width=\textwidth]{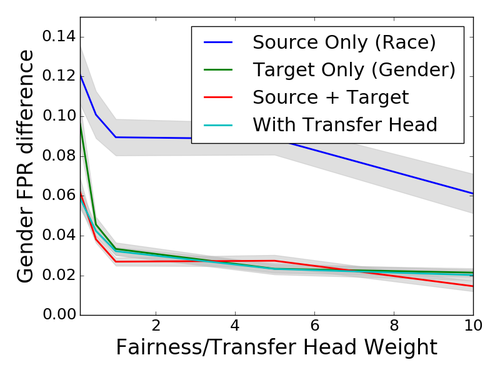}
    \caption{$1000$ gender samples.}
    \end{subfigure}
    \caption{Race $\rightarrow$ Gender on the UCI dataset. Comparison of FPR difference on sensitive attribute \textit{gender}, by transferring from the source domain (1000 samples for each gender) to the target domain (varying samples for each race as indicated in the caption).}
    \label{fig:fpr_diff_gender_uci}
\end{figure}

\begin{figure}
    \centering
    \begin{subfigure}{0.23\columnwidth}
    \includegraphics[width=\textwidth]{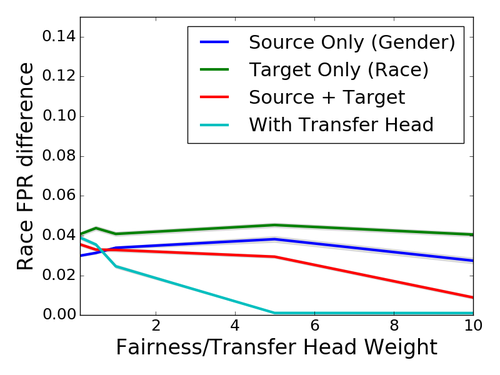}
    \caption{$50$ race samples.}
    \label{fig:fpr_diff_race_50_compas}
    \end{subfigure}
    \hspace{0.01\columnwidth}
    \begin{subfigure}{0.23\columnwidth}
    \includegraphics[width=\textwidth]{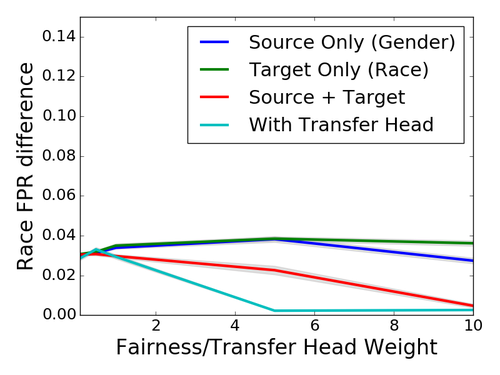}
    \caption{$100$ race samples.}
    \label{fig:fpr_diff_race_100_compas}
    \end{subfigure}
    \hspace{0.01\columnwidth}
    \begin{subfigure}{0.23\columnwidth}
    \includegraphics[width=\textwidth]{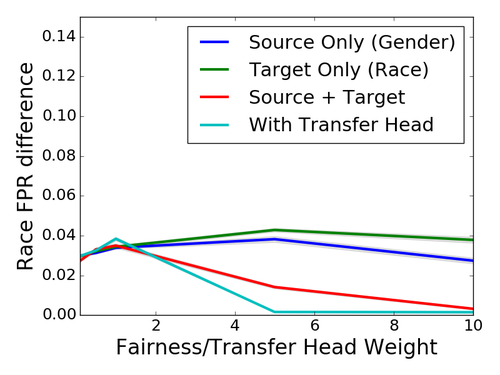}
    \caption{$500$ race samples.}
    \end{subfigure}
    \hspace{0.01\columnwidth}
    \begin{subfigure}{0.23\columnwidth}
    \includegraphics[width=\textwidth]{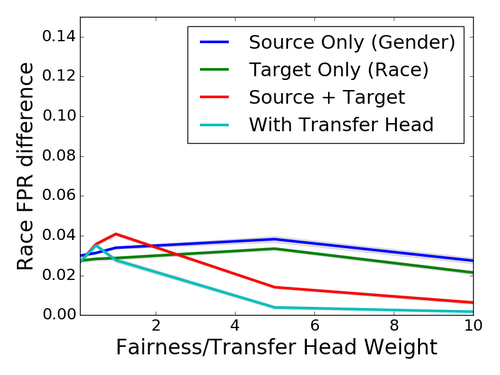}
    \caption{$1000$ race samples.}
    \end{subfigure}
    \caption{Gender $\rightarrow$ Race on the COMPAS dataset. Comparison of FPR difference on sensitive attribute \textit{race}, by transferring from the source domain (1000 samples for each gender) to the target domain (varying samples for each race as indicated in the caption).}
    \label{fig:fpr_diff_race_compas}
\end{figure}

\begin{figure}
    \centering
    \begin{subfigure}{0.23\columnwidth}
    \includegraphics[width=\textwidth]{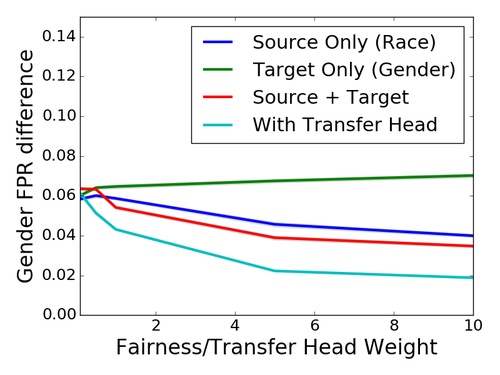}
    \caption{$50$ gender samples.}
    \label{fig:fpr_diff_gender_50_compas}
    \end{subfigure}
    \hspace{0.01\columnwidth}
    \begin{subfigure}{0.23\columnwidth}
    \includegraphics[width=\textwidth]{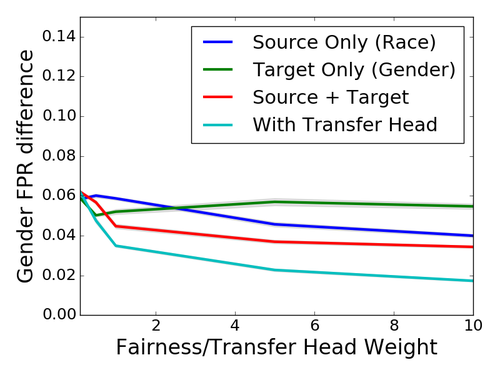}
    \caption{$100$ gender samples.}
    \label{fig:fpr_diff_gender_100_compas}
    \end{subfigure}
    \hspace{0.01\columnwidth}
    \begin{subfigure}{0.23\columnwidth}
    \includegraphics[width=\textwidth]{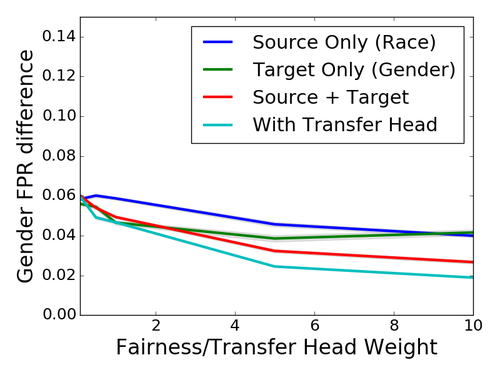}
    \caption{$500$ gender samples.}
    \end{subfigure}
    \hspace{0.01\columnwidth}
    \begin{subfigure}{0.23\columnwidth}
    \includegraphics[width=\textwidth]{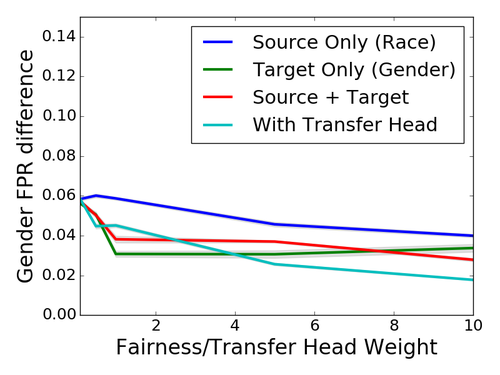}
    \caption{$1000$ gender samples.}
    \end{subfigure}
    \caption{Race $\rightarrow$ Gender on the COMPAS dataset. Comparison of FPR difference on sensitive attribute \textit{gender}, by transferring from the source domain (1000 samples for each gender) to the target domain (varying samples for each race as indicated in the caption).}
    \label{fig:fpr_diff_gender_compas}
    \vspace{-0.1in}
\end{figure}

\subsection{Accuracy vs. Fairness/Transfer Head Weight}
In this section we further add the comparison on accuracy with respect to the weight of the fairness/transfer head.
Fig.~\ref{fig:accuracy_race_to_gender_uci} and Fig.~\ref{fig:accuracy_gender_to_race_uci} show the results comparing the Transfer model with the baselines, by transferring \textit{race} to \textit{gender}, and \textit{race} to \textit{gender}, respectively.
Fig.~\ref{fig:accuracy_race_to_gender_compas} and Fig.~\ref{fig:accuracy_gender_to_race_compas} show the results on COMPAS.

\begin{figure}[h]
    \centering
    \begin{subfigure}{0.23\columnwidth}
    \includegraphics[width=\textwidth]{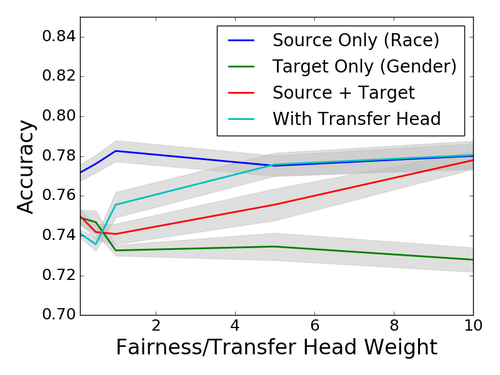}
    \caption{$50$ gender samples.}
    \end{subfigure}
    \hspace{0.01\columnwidth}
    \begin{subfigure}{0.23\columnwidth}
    \includegraphics[width=\textwidth]{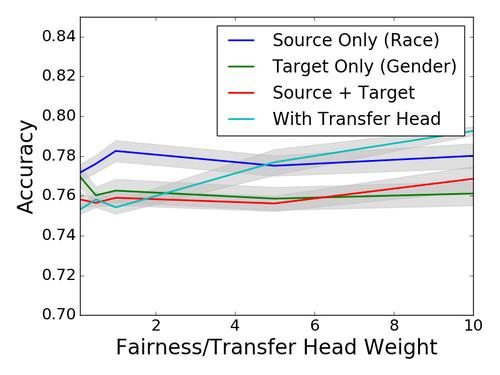}
    \caption{$100$ gender samples.}
    \end{subfigure}
    \hspace{0.01\columnwidth}
    \begin{subfigure}{0.23\columnwidth}
    \includegraphics[width=\textwidth]{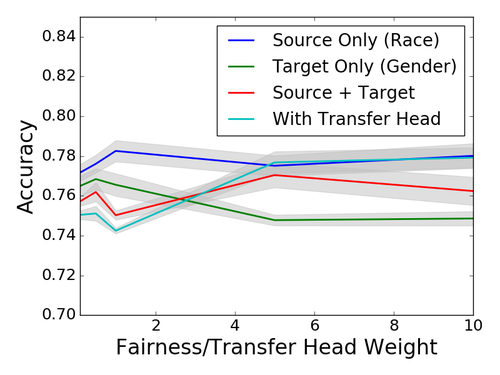}
    \caption{$500$ gender samples.}
    \end{subfigure}
    \hspace{0.01\columnwidth}
    \begin{subfigure}{0.23\columnwidth}
    \includegraphics[width=\textwidth]{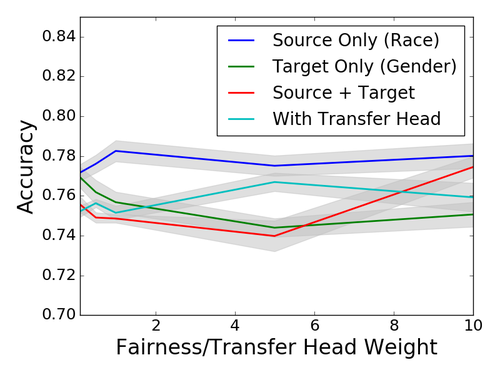}
    \caption{$1000$ gender samples.}
    \end{subfigure}
    \caption{Comparison of accuracy on the UCI data for Race $\rightarrow$ Gender, by transferring from the source domain (1000 samples for each race) to the target domain (varying samples for each gender as indicated in the caption).}
    \label{fig:accuracy_race_to_gender_uci}
    \vspace{-0.2in}
\end{figure}

\begin{figure}[h]
    \centering
    \begin{subfigure}{0.23\columnwidth}
    \includegraphics[width=\textwidth]{plots/uci/mmd_race_compare_race_fpr_diff_gender1000_race50_acc_weight.png}
    \caption{$50$ race samples.}
    \end{subfigure}
    \hspace{0.01\columnwidth}
    \begin{subfigure}{0.23\columnwidth}
    \includegraphics[width=\textwidth]{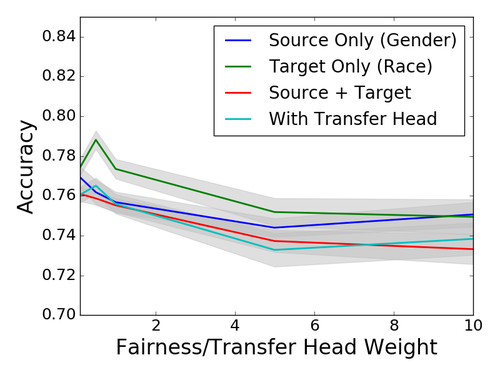}
    \caption{$100$ race samples.}
    \end{subfigure}
    \hspace{0.01\columnwidth}
    \begin{subfigure}{0.23\columnwidth}
    \includegraphics[width=\textwidth]{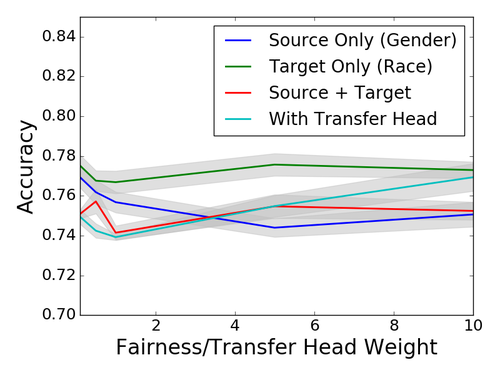}
    \caption{$500$ race samples.}
    \end{subfigure}
    \hspace{0.01\columnwidth}
    \begin{subfigure}{0.23\columnwidth}
    \includegraphics[width=\textwidth]{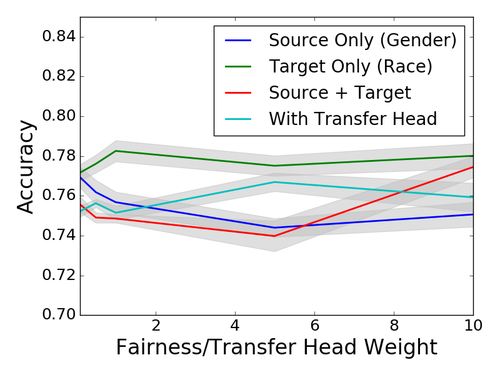}
    \caption{$1000$ race samples.}
    \end{subfigure}
    \caption{Comparison of accuracy on the UCI data for Gender $\rightarrow$ Race, by transferring from the source domain (1000 samples for each gender) to the target domain (varying samples for each race as indicated in the caption).}
    \label{fig:accuracy_gender_to_race_uci}
    \vspace{-0.2in}
\end{figure}

\begin{figure}[h]
    \centering
    \begin{subfigure}{0.23\columnwidth}
    \includegraphics[width=\textwidth]{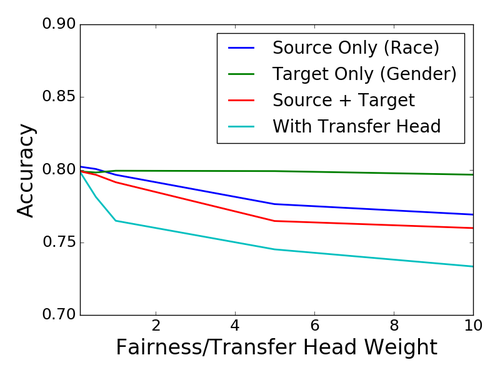}
    \caption{$50$ gender samples.}
    \end{subfigure}
    \hspace{0.01\columnwidth}
    \begin{subfigure}{0.23\columnwidth}
    \includegraphics[width=\textwidth]{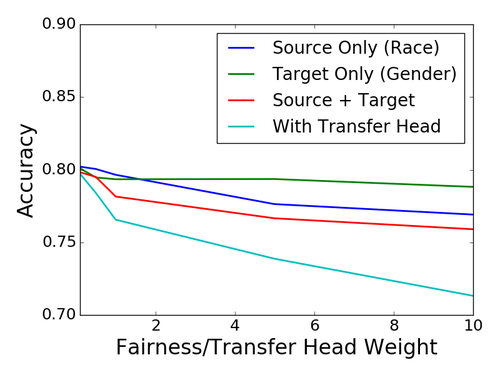}
    \caption{$100$ gender samples.}
    \end{subfigure}
    \hspace{0.01\columnwidth}
    \begin{subfigure}{0.23\columnwidth}
    \includegraphics[width=\textwidth]{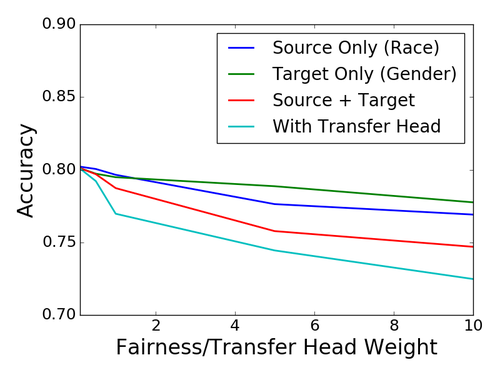}
    \caption{$500$ gender samples.}
    \end{subfigure}
    \hspace{0.01\columnwidth}
    \begin{subfigure}{0.23\columnwidth}
    \includegraphics[width=\textwidth]{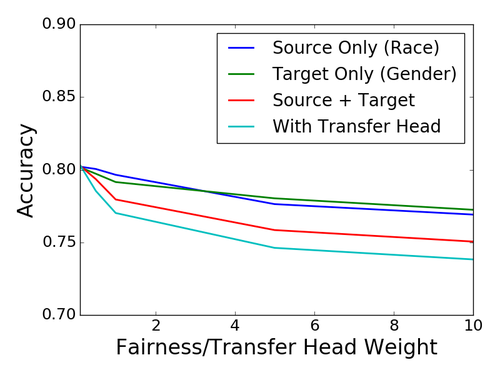}
    \caption{$1000$ gender samples.}
    \end{subfigure}
    \caption{Comparison of accuracy on COMPAS for Race $\rightarrow$ Gender, by transferring from the source domain (1000 samples for each race) to the target domain (varying samples for each gender as indicated in the caption).}
    \label{fig:accuracy_race_to_gender_compas}
    \vspace{-0.2in}
\end{figure}

\begin{figure}[h]
    \centering
    \begin{subfigure}{0.23\columnwidth}
    \includegraphics[width=\textwidth]{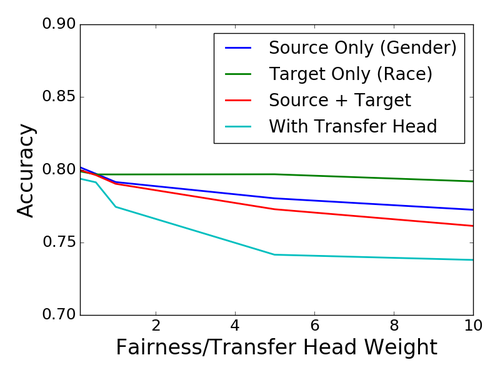}
    \caption{$50$ race samples.}
    \end{subfigure}
    \hspace{0.01\columnwidth}
    \begin{subfigure}{0.23\columnwidth}
    \includegraphics[width=\textwidth]{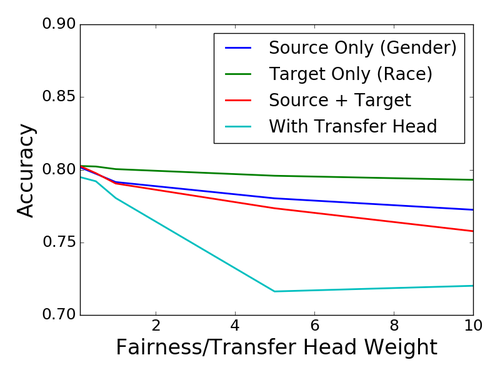}
    \caption{$100$ race samples.}
    \end{subfigure}
    \hspace{0.01\columnwidth}
    \begin{subfigure}{0.23\columnwidth}
    \includegraphics[width=\textwidth]{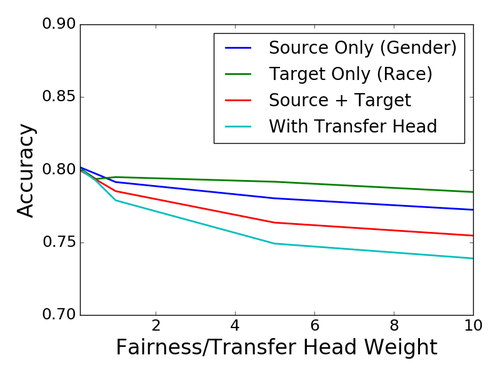}
    \caption{$500$ race samples.}
    \end{subfigure}
    \hspace{0.01\columnwidth}
    \begin{subfigure}{0.23\columnwidth}
    \includegraphics[width=\textwidth]{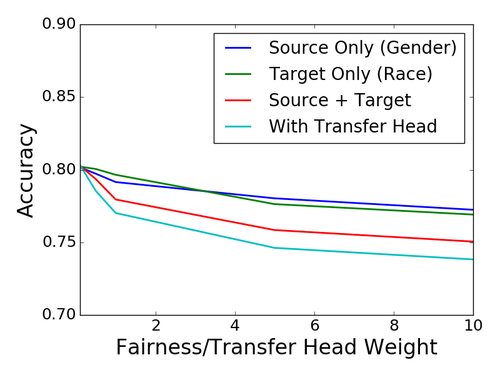}
    \caption{$1000$ race samples.}
    \end{subfigure}
    \caption{Comparison of accuracy on COMPAS for Gender $\rightarrow$ Race, by transferring from the source domain (1000 samples for each gender) to the target domain (varying samples for each race as indicated in the caption).}
    \label{fig:accuracy_gender_to_race_compas}
    \vspace{-0.2in}
\end{figure}

\end{document}